\documentclass[twoside,11pt]{article}

\usepackage{jair, theapa, rawfonts}
\usepackage{aliascnt}
\usepackage{algorithm}
\usepackage{algorithmic}
\usepackage{graphics}
\usepackage{epsfig}
\usepackage{times}
\usepackage{tabls}
\usepackage{tabularx}
\usepackage{subfigure}
\usepackage{latexsym}
\usepackage{fix-cm}
\usepackage{plain}
\usepackage{dsfont}
\usepackage{graphicx}
\usepackage[fleqn]{amsmath}
\usepackage{amssymb}
\usepackage{xspace}
\usepackage{tabls}
\usepackage{multicol}
\usepackage{tabularx}
\usepackage{url}
\usepackage{multicol}
\usepackage{multirow}
\usepackage{color}
\usepackage{amsmath}
\usepackage{amsthm}
\usepackage{amssymb}

\ShortHeadings{Sematic sort: A supervised approach to personalized semantic relatedness}
{El-Yaniv, \& Yanay}
\firstpageno{1}

\begin{document}

\newtheorem{theorem}{Theorem}
\newtheorem{lemma}[theorem]{Lemma}
\newtheorem{proposition}[theorem]{Proposition}
\newtheorem{corollary}[theorem]{Corollary}
\newtheorem{observation}[theorem]{Observation}
\newtheorem{definition}{Definition}
\newcommand{\BlackBox}{\rule{1.5ex}{1.5ex}}  

\newcommand{\nsd}{\mathsf{NSD}}
\newcommand{\wnsd}{\mathsf{WNSD}}
\newcommand{\lambdaUP}{\lambda_{up}}
\newcommand{\lambdaDN}{\lambda_{dn}}

\def\algname{\operatorname{SemanticSort}}

\newcommand\red[1]{\color{red}#1}
\newcommand{\eps}{\epsilon}
\newcommand{\alg}{\textsc{alg}}
\newcommand{\cl}{{\ell}}
\newcommand{\cA}{{\cal A}}
\newcommand{\cC}{{\cal C}}
\newcommand{\cD}{{\cal D}}
\newcommand{\cP}{{\cal P}}
\newcommand{\cF}{{\cal F}}
\newcommand{\cH}{{\cal H}}
\newcommand{\cX}{{\cal X}}
\newcommand{\cY}{{\cal Y}}
\newcommand{\cG}{{\cal G}}
\newcommand{\cS}{\Sigma}

\newcommand{\Bf}{\mathbf{f}}
\newcommand{\Bg}{\mathbf{g}}
\newcommand{\BX}{\mathbf{X}}
\newcommand{\BY}{\mathbf{Y}}
\newcommand{\BD}{\mathbf{D}}
\newcommand{\I}{\mathbb{I}}
\newcommand{\C}{\mathcal{C}}
\newcommand{\N}{\mathcal{N}}
\newcommand{\cO}{\cal O}
\newcommand{\E}{\mathbf{E}}
\newcommand{\err}{\mathop{\rm er}}
\newcommand{\hf}{\hat{f}}
\newcommand{\hg}{\hat{g}}
\newcommand{\LA}{L}
\newcommand{\GLi}{G_L^{(i)}}
\newcommand{\GUi}{G_U^{(i)}}
\newcommand{\fail}{\texttt{fail}}
\newcommand{\reals}{\mathbb{R}}
\newcommand{\eqdef}{\triangleq}

\newcommand{\proc}{\textbf{Procedure\xspace}}
\newcommand{\init}{\textbf{Initialize\xspace}}
\newcommand{\myloop}{\textbf{Loop\xspace}}
\newcommand{\until}{\textbf{Until\xspace}}
\newcommand{\while}{\textbf{While\xspace}}
\newcommand{\procname}[1]{\textsc{#1}}
\newcommand{\procargs}[1]{$(#1)$}
\newcommand{\procarg}[1]{\textit{#1}}
\newcommand{\procfor}{\textbf{For\xspace}}
\newcommand{\procforeach}{\textbf{Foreach\xspace}}
\newcommand{\procif}{\textbf{If\xspace}}
\newcommand{\procor}{\textbf{Or\xspace}}
\newcommand{\procelse}{\textbf{Else\xspace}}
\newcommand{\procreturn}{\textbf{Return\xspace}}
\newcommand{\procnorm}{\textbf{Normalize weights\xspace}}
\newcommand{\TABLINE}[1]{\hspace*{#1}\=\hspace{#1}\=\hspace{#1}\=\hspace{#1}\=\hspace{#1}\=\hspace{#1}\=\hspace{#1}\=\hspace{#1}\=\hspace{#1}\=\hspace{#1}\=\hspace{#1}\=\kill}
\newcommand{\MyNode}[1]{\Tr{\psframebox[fillcolor=white!50,fillstyle=solid]{#1}}}

\title{Semantic Sort: A Supervised Approach to Personalized Semantic Relatedness}

\author{\name Ran El-Yaniv \email rani@cs.technion.ac.il \\
       \name David Yanay \email dudu.yanay@gmail.com \\
       \addr Department of Computer Science,\\
       Technion - Israel Institute of Technology}

\maketitle

\begin{abstract}
We propose and study a novel supervised approach
to learning statistical semantic relatedness models from subjectively annotated training examples.
The proposed semantic model consists of parameterized
co-occurrence statistics associated with
textual units of a large background knowledge corpus.
We present an efficient algorithm for learning such semantic
models from a training sample of relatedness preferences.
Our method is corpus independent and can essentially rely on any sufficiently large (unstructured)
collection of coherent texts.
Moreover, the approach facilitates the fitting of semantic models for specific users or groups of users.
We present the results of extensive range of experiments from small to large scale,
indicating that the proposed method is effective and competitive with the state-of-the-art.
\end{abstract}

\section{Introduction}
\label{Section:Introduction}
In recent years the problem of automatically determining semantic relatedness has been steadily gaining attention among statistical NLP and AI researchers.
This surge in semantic relatedness research has been reinforced by the emergence of applications that can greatly benefit
from semantic relatedness capabilities. Among these applications we mention targeted advertising
\shortcite{Broder2007,Ribeiro-Neto2005}, information retrieval and web search
\shortcite{Egozi2008,Varelas2005,Guha2003,Richardson95,Srihari2000},
automatic tagging and linking \shortcite{KorenLMS11,Schilder2001,Miller1993,Green1999}, and text categorization
 \shortcite{Gabrilovich2006,Sebastiani2002,Gabrilovich2005}.

To motivate the need for semantic analysis capabilities, consider, for example, the difficult task of categorizing short text units (e.g., ads, tweets, search queries, reviews, etc.) using supervised learning.
Each specific unit contains very few words, and therefore we can find many units expressing the same idea (and belonging to the same category),
which only share function words (e.g., stop-words), but not content words.
The pedestrian approach, based on bag-of-words representation, might not be effective in this task
because short text units to be categorized often do not share many words with the training examples in their category.
It is now clear that it is necessary to represent such texts using semantic features
\shortcite<see, e.g.,>{Sriram2010,MihalceaCS06,Sun2011,Turney02,Yuhua2006}.
In general, many other applications require some form of deep semantic processing
and cannot rely only on shallow syntactical considerations.


In \emph{semantic relatedness} the goal is to quantify the intensity of how much two terms are related to each other.
The relatedness task considers all relation types between two target terms. These relations
can be among the known formal linguistics ones, which have a name (such as synonyms, antonyms, hypernyms, meronyms, related nouns, etc.),
but in general, such relations can be informal in the sense that they do not have a given name and they express some
(perhaps complex) relation between the two terms that has not been studied.
For example, consider the following three term pairs
\begin{eqnarray*}
\texttt{Michael Jordan} & : & \texttt{Basketball} \\
\texttt{Madonna} & : & \texttt{Pop} \\
\texttt{Marilyn Monroe} & : & \texttt{Movie}.
\end{eqnarray*}
All three pairs, \texttt{X : Y}, are strongly related via a common relation.
What would be your assessment of an underlying relation for these three pairs?\footnote{The common relation
we had in mind is ``X is an all times Y star''.}

To summarize, the semantic relatedness task involves all possible relations whose number is in principle unbounded.
We note that the
The NLP literature also considers the task of \emph{semantic similarity} in which the
(rather limited) goal  is to quantify the \emph{synonymy} relation between two terms.
Indeed, as argued by \citeA{BudanitskyH06}, semantic relatedness
is considered more general than semantic similarity. In this sense the general semantic relatedness task is
more difficult.

In this work we consider the semantic relatedness task and we aim at qualifying the relatedness of two given terms
where the underlying relation can be formal or informal.
However, we do not aim at identifying or characterizing the underlying relation.\footnote{Such a relation \emph{characterization} task
is a very interesting problem in and off itself, but is far beyond the scope of our work.}
Note also that
in the standard semantic relatedness setting we consider here (see definitions in Section~\ref{Section:ProblemSetup}),
the terms to be evaluated for relatedness are provided without a context, unlike standard disambiguation settings
\cite<see, e.g.,>{Lesk1986,Cowie1992,Yarowsky1995,Agirre1996,Schutze1998,LeacockMC1998,Ide1998,Navigli2010}.
Thus, as most existing works on semantic relatedness, focusing on our or equivalent setup,
we do not aim at directly solving the disambiguation problem along the way.\footnote{Nevertheless, we note that we believe that it is possible
to extend our techniques to disambiguate a term within a context.}

Semantic relatedness is an elusive concept.
While a rigorous mathematical definition of semantic relatedness is currently beyond grasp, the concept
is intuitively clear.
Moreover, humans exhibit remarkable
capabilities in processing and understanding textual information, which is partly related to
their ability to assess semantic relatedness of terms.
Even without precise understanding of this intelligent ability, it is still intuitively clear that
deep semantic processing
of terms and text fragments should heavily rely on background knowledge and experience.

The statistical NLP and AI communities have adopted a pragmatic modus operandi to these questions: even if we don't know how to
define semantic relatedness, we can still create computer programs that emulate it.
Indeed, a number of useful heuristic approaches to semantic relatedness have been
proposed,  and this line of work has proven to be rewarding
\cite<see, e.g.,>{Resnik95,DekangLin657297,Bloehdorn2007,GabrilovichM07,Cowie1992}.
In particular, it has been shown that useful semantic relatedness scores can be systematically extracted
from large lexical databases or electronic repositories of common-sense and domain-specific background knowledge.

With the exception of a few papers,
most of the algorithms proposed for semantic relatedness
valuation have been following \emph{unsupervised} learning or \emph{knowledge engineering} procedures.
Such semantic relatedness valuation functions have been
generated, for the most part,  using some hand-crafted formulas applied to semantic information
 extracted from a (structured) background knowledge corpus.
The proposed methods have employed a number of interesting techniques,
 some of which are discussed in Section~\ref{Section:RelatedWork}.

One motivation for the present work is the realization that semantic relatedness assessments are \emph{relative} and \emph{subjective}
rather than \emph{absolute} and \emph{objective}. While we can expect some kind of consensus among people on the (relative) relatedness valuations of
basic terms, the relatedness assessments of most terms depend on many subjective and personal factors such as literacy, intelligence, context, time and location.
For example, the name \texttt{Michael Jordan} is generally strongly related to \texttt{Basketball},
but some people in the machine learning community may consider it more related to \texttt{Machine Learning}.
 As another example, consider WordSim353 \shortcite{FinkelsteinGMRSWR01i}, the standard benchmark dataset for evaluating and comparing semantic relatedness measures (see Section~\ref{Section:StandardBenchmarkDatasets}).
This benchmark contains some controversial relative preferences between word pairs such as
\begin{eqnarray*}
\texttt{Arafat-Peace}& vs. & \texttt{Arafat-Terror} \\
\texttt{Jerusalem-Israel} & vs. & \texttt{Jerusalem-Palestinian}.
\end{eqnarray*}
Can you tell which pair is more related in each instance?
Obviously, the answer must be personal/subjective.
As a final example for the subjective nature of semantic relatedness, let's consider the Miller and Charles's dataset \cite{MillerCharlesDataset}, which is a distinct subset of the Rubenstein and Goodenough's dataset \cite{Rubenstein1965}.
 Both datasets were annotated using the same score scale by (probably different) human raters.
 This double rating resulted in different semantic scores and more importantly, in different
 pair rankings.\footnote{The Spearman correlation between the rankings  of these datasets is 0.947.}
 It is evident that each dataset expresses the subjective semantics of its human raters.

This sensitivity of semantic relatedness to subjective factors should make it very hard, if not impossible,
 to satisfy all semantic relatedness needs using an unsupervised or a hand-crafted method.
 Moreover, the fitting to a particular test benchmark in an unsupervised manner
is not necessarily entirely meaningful in certain scenarios.
Indeed, some published semantic relatedness measures outperform others in
certain benchmarks tests and underperform in others.
For example, \citeA{PonzettoS07} mentioned that the WordNet-based measures
perform better than the Wikipedia-based measures on the Rubenstein and Goodenough benchmark, but the WordNet methods are inferior over
WordSim353.

In this work we propose a novel \emph{supervised} approach to learning semantic relatedness from examples.
Following \shortciteA{1620758} we model semantic relatedness learning as a binary classification problem
where each instance encodes the relative relatedness of two term pairs.
Given a labeled training set our goal is to learn a semantic relatedness function capable of determining the labels of unobserved instances.
We present an empirical risk minimization (ERM) algorithm that learns by inducing
a weighted measure of terms co-occurrence defined over a background knowledge corpus of free-text documents.
The labeled examples are used to fit this model to the training data.
The resulting algorithm is relatively simple, has only few hyper-parameters, and is
corpus independent. Our experiments show that the algorithm achieves notable generalization performance.
This is observed over a wide range of experiments on a number of benchmarks.
We examine and demonstrate the effectiveness of our algorithm
using two radically different background knowledge corpora: an old version of Wikipedia and
the books in the Project Gutenberg.

\section{Related Work}
\label{Section:RelatedWork}
The literature survey in this section attempts to encompass techniques and algorithms for assessing semantic relatedness.
As the class of such techniques is quite large, the discussion here is limited to ideas and works in close vicinity
of the present work.
Semantic relatedness techniques typically rely on some kind of world or expert knowledge, which we term here
\emph{background knowledge (BK)}. The BK is a key element in many methods and
we categorize semantic relatedness techniques into three main types according to type and structure of
their BK.
\emph{Lexical} methods rely on lexical databases such as WordNet \cite{WordNet} (George A. Miller began the WordNet project in the mid-1980s) or Roget's Thesaurus \cite{Roget1852}.
\emph{Wiki} methods rely on structured BK corpora like Wikipedia or the Open Directory Project (DMOZ).
The structure in Wiki BKs can be manifested in various ways, and the most important ones are semantic coherency of documents and titles,
meaningful interlinks (often accompanied with meaningful anchor texts), and hierarchical categorization.
Finally, semantic relatedness techniques that rely on unstructured text collections are referred to as \emph{structure-free} methods.
Before delving into these three BK types we divert the discussion in the next subsection and
elaborate on standard benchmark datasets for evaluating semantic relatedness techniques.

\subsection{Standard Benchmark Datasets}
\label{Section:StandardBenchmarkDatasets}
A key contributing element that greatly influenced semantic relatedness research is the presence of benchmark test collections.
While the currently available datasets are quite small, they are considered ``representative'' and meaningful because they were annotated by human raters.
Each of these datasets consists of a list of word pairs, along with their numerical
relatedness score.
In the semantic relatedness literature it is common to evaluate relatedness ranking, $Y = \{y_i\}_1^n$, with the corresponding ground truth
(conveyed by such datasets), $Z = \{z_i\}_1^n$,  using the Spearman correlation, defined as,
$$
\rho(Y,Z)=1 - \frac{6 \cdot \sum_1^n (y_i - z_i)^2}{n \cdot (n^2 - 1)}.
$$
Rubenstein and Goodenough (R\&G) \cite{Rubenstein1965} were perhaps the first to assemble an annotated  semantic dataset.
Their dataset consists of 65 word pairs associated with their similarity scores,
where mark $4$ is assigned to the most similar pairs (often synonyms), and mark $0$ to the least similar ones.
Miller and Charles (M\&C) \cite{MillerCharlesDataset} selected a particular subset from the R\&G set consisting of 30 word pairs,
which were than ranked using the same 0--4 score scale.

WordSim353 is the most recent semantic benchmark dataset \cite{FinkelsteinGMRSWR01i}.\footnote{WordSim353 is available at \url{http://www.cs.technion.ac.il/~gabr/resources/data/wordsim353}.}
This dataset, while still small, is substantially larger and consists of a list of 353 word pairs along with their human evaluated semantic relatedness scores, from $0$ (the least related) to $10$ (the most related).
While the R\&G and M\&C datasets are used for evaluating semantic similarity measures (i.e., synonym relations), WordSim353 involves a variety of semantic relations and in the past years
has been providing a focal point to practical semantic relatedness research. In the discussion below we will mention WordSim353 Spearman correlation scores in cases where they were reported.

\subsection{Lexical Methods}
Many of the lexical methods rely on the WordNet database \cite{WordNet} as their BK corpus.
WordNet is a lexical database for the English language that was created and is being maintained at the Cognitive Science Laboratory in Princeton University.
WordNet organizes English words in groups called \emph{synsets}, which are sets of synonyms.
The lexical relations between synsets are categorized into types such as hypernyms, meronyms, related nouns, ``similar to'',
etc.\footnote{
Y is a hypernym of X if every X is a (kind of) Y. See definitions of the rest of these linguistic relations in \url{http://en.wikipedia.org/wiki/WordNet}.}
 In addition to these semantic relations WordNet also provides a polysemy count (the number of synsets that contain the term) for disambiguation.
 WordNet is intended to be used both manually or automatically to serve applications.

Another lexical database is Roget's Thesaurus \cite{Roget1852}. Although it might be implied from its name, this database is not a dictionary of synonyms,
and as stated by Kirkpatrick \cite{Kirkpatrick1998}: ``\emph{it is hardly possible to find two words having in all respect the same meaning, and being therefore
interchangeable}.''
Similarly to WordNet, Rodget's Thesaurus contains groups of terms, called \emph{semicolon groups}, which are linked. However, those links are not
lexically annotated as in WordNet.

Lexical semantic relatedness methods typically view the lexical database as a graph whose nodes are terms and edges are lexical relations.
Semantic relatedness scores are extracted using certain statistics defined on this graph.

\citeA{Resnik95,Resnik99} generated semantic relatedness scores based on a combination of IS-A (hyponym) relations in WordNet and a structure-free corpus.
Each synset in WordNet, $c$, is assigned a probability, $prob(c)$, according to the frequency of its descendants (including itself) in a corpus.
The information content (ic) of two synsets $c_1$ and $c_2$ is then defined as
$ic(c_1,c_2) = \max_{c \in \Psi(c_1,c_2)} {\{ - \log(prob(c))\}}$,
where $\Psi(c_1,c_2)$ is the set of synsets that are connected by an IS-A directed path to both $c_1$ and $c_2$; that is,
 $\Psi(c_1,c_2)$ is the set of all the ancestors of both $c_1$ and $c_2$.
The semantic relatedness (sr) of two terms, $t_1$ and $t_2$, is defined as
$$
sr(t_1,t_2) = \max_{c_1 \in s(t_1), \ c_2 \in s(t_2)} {\{ ic(c_1,c_2)\}},
$$
where $s(t)$ is the set of synsets in WordNet that contain $t$.

Another attempt to combine WordNet IS-A relations with a structure-free corpus was made by
\citeA{journals/corr/cmp-lg-9709008}.
They weighted a link (lexical relation) between a child node, $c$, and a parent node, $p$, according to the differences in their information content (as proposed by Resnik),
the depth of $p$ in the hierarchy, the degree of $p$, and the average degree in the whole hierarchy.
The semantic relatedness of two terms, $t_1$ and $t_2$, is valuated by summing up the weights along the shortest path between a synset that contains $t_1$, and a
synset containing $t_2$. Utilizing their measure, Jiang and Conrath managed to improve upon the Resnik measure.

\shortciteA{TKDE2003} defined and calculated the semantic similarity between two words, $w_1$ and $w_2$,  as a function of: (i) the shortest path between $w_1$ and $w_2$;
(ii) the depth of the first concept in the IS-A hierarchy that subsumes both $w_1$ and $w_2$; and (iii) the semantic density of $w_1$ and $w_2$, which is based on their information content.
Li et al. assumed that these three information sources are independent and used several nonlinear functions to combine them.

\citeA{BanerjeeP03} extended the glosses (definitions in WordNet) overlap measure defined by
\citeA{Lesk1986}. Given two synsets in WordNet, they enriched their
glosses with the glosses of their related synsets, according to WordNet link structure, and calculated semantic relatedness
as a function of the overlap between these
``enriched glosses.''
Banerjee and Pedersen also weighted the terms in the overlap according to the number of words in those terms.
\citeA{Pedersen06}
combined co-occurrences in raw text with WordNet definitions to build gloss vectors.

\citeA{Jarmasz03} calculated the semantic relatedness between two terms as the number of edges in all the pathes between the two terms in a Roget's Thesaurus graph,
achieving 0.54 correlation with WordSim353 dataset \cite{JarmaszThesis}.

\citeA{HughesR07} calculated the Personalized PageRank vector \shortcite{PersonalizedPageRank} for each node (term) under some representation of WordNet as a graph.
They considered three node types: (i) synsets; (ii) \emph{TokenPOS}, for a word coupled with its part-of-speech tag;
and (iii) \emph{Token}, for a word without its part-of-speech tag. In addition to WordNet's links, each synset is connected to all the tokens in it or in its gloss. Moreover, they proposed three models to compute the stationary distribution: (i) \emph{MarkovLink},
which contains WordNet's links and links from tokens to synsets that contain them;
(ii) \emph{MarkovGloss}, containing only links between tokens and synsets that contain them in their gloss;
and (iii) \emph{MarkovJoined}, containing all the edges in both MarkovLink and MarkovGloss.
In order to estimate the similarity between two PageRank vectors, they used the cosine similarity measure, as well as a
newly proposed \emph{zero-KL Divergence} measure, based on Kullback-Leibler (KL) divergence measure of information theory.
Hughes and Ramage obtained their best result of 0.552 Spearman correlation with WordSim353,
when using the MarkovLink model and the zero-KL Divergence.

\shortciteA{Tsatsaronis2009,TsatsaronisVV10} proposed the \emph{Omiotis} measure. They weighted the relations between synsets in WordNet according to their frequency. Given a WordNet path, $p$, between two synsets, $s_1$ and $s_2$, they defined its \emph{semantic compactness measure (SCM)} as the product of edge weights in $p$. In addition, they defined the \emph{semantic path elaboration (SPE)} of $p$ as:
$SPE\left( p \right) = \prod\nolimits_{i = 1}^l {\frac{{2{d_i} \cdot {d_{i + 1}}}}{{d_i} + {d_{i + 1}}} \cdot \frac{1}{d_{\max }}}$, where $d_i$ is the depth in WordNet of the synset $s_i$ in $p$,
 and $d_{\max }$ is the maximum depth. The compactness of $p$ is thus the product of the harmonic mean of
 depths of consecutive edges, normalized by the maximum depth. The semantic relatedness between $s_1$ and $s_2$
according to $p$ is $SCM\left( p \right) \cdot SPE\left( p \right)$. Finally, they defined the semantic relatedness between $s_1$ and $s_2$ as,
$\mathop {\max }\limits_{p \in P} \left\{ {SCM\left( p \right) \cdot SPE\left( p \right)} \right\}$, where $P$ is the set of all paths between $s_1$ and $s_2$.
Omiotis achieved 0.61 spearman correlation with WordSim353.

\citeA{Morris1991} introduced the concept of lexical chains between words as an element to represent and find the text structure.
They argued that coherent text is assembled from textual units (sentences and phrases) that convey similar meaning.
They termed these sequences of textual units as \emph{lexical chains}.
Using these chains they defined text cohesion and determined its meaning.
\citeA{HirstStOnge1998} constructed these chains from the links between WordNet synsets.

The reader is referred to \cite{Budanitsky01,BudanitskyH06} for a study of various other lexical methods. Refer also to
\cite{PedersenPM04} for a freely available software that implements six semantic measures: three information content based measures \cite{Resnik95,DekangLin657297,journals/corr/cmp-lg-9709008}, two path length based measures \cite{Leacock1998,WuP94}, and a baseline measure that is the inverse of the length of the shortest path between two concepts.

\subsection{Wiki Methods}
\citeA{WikiRelate2006,PonzettoS07} are perhaps the first to consider Wikipedia as the source for semantic relatedness information.
The relatedness between two terms $t_1$ and $t_2$ is computed
by identifying representative Wiki articles $d_1$ and $d_2$ containing those terms in their titles,
respectively.\footnote{In cases of multiple representative articles, several heuristics were proposed to resolve ambiguity.}
The semantic relatedness is then derived in several ways using several distance measures between $d_1$ and $d_2$, such as
normalized path-length in the category hierarchy \cite{Leacock1998},
information content \cite{Resnik95},
text overlap (number of common terms, proposed by \citeA{Lesk1986} and \citeA{BanerjeeP03}), etc.
The best result of this method (called Wikirelate!) achieved a 0.49 Spearman correlation with WordSim353.

\citeA{GabrilovichM07} introduced the celebrated Explicit Semantic Analysis (\emph{ESA}) method,
where each term $t$ has distributional representation $v(t)$ over all Wikipedia articles.
The components of the vector $v(t)$ are TF-IDF scores \cite{Salton1988513} of the term $t$ in all articles.
The semantic relatedness value of two terms is defined as the cosine of their vectors.
Various enhancements and extensions to this basic ESA method were discussed in \cite{GabrilovichM09};
for example, a filter based on link analysis was introduced to obtain more meaningful distributional term representations.
ESA achieved a Spearman correlation of 0.75 with WordSim353, and is currently widely recognized as a top performing
semantic relatedness method. Moreover, ESA is frequently used as a subroutine in many applications
\cite<see, e.g.,>{Yeh:2009:WRW:1708124.1708133,RadinskyAGM11,Haralambous2011}.

\citeA{MilneWitten2008} proposed the \textit{Wikipedia Link-based Measure (WLM)},
which utilizes the interlinks between Wikipedia's articles.
They proposed two methods to calculate the relatedness between two articles.
The first calculates a weighted vector of the links of each article and returns the cosine of these vectors.
The link weighting function is inspired by the TF-IDF measure.
The second method utilizes the Normalized Google Distance of \citeA{GoogleSimilarityDistance}
(discussed in Section~\ref{Section:AdaptiveCoOccurrenceMeasure}), applied to interlinks counts.
Given two terms, WLM selects two representing articles to these terms and returns the average of the above
methods.\footnote{Milne and Witten also proposed several ways to choose representative articles for a given pair of terms.}
WLM achieved a Spearman correlation of 0.69 with WordSim353.

\shortciteA{Yeh:2009:WRW:1708124.1708133} proposed a method called \textit{WikiWalk}
that utilizes Wikipedia as a graph whose nodes are articles and the interlinks are the edges.
Given a text fragment, WikiWalk maps it to a distribution over the nodes and calculates its Personalized PageRank in the graph according to this distribution.
Yeh et al. proposed two methods to map the given text to a distribution over nodes: dictionary based, and \textit{ESA} based.
The semantic relatedness of two terms is defined as the cosine similarity between their Personalized PageRank vectors.
WikiWalk achieved a Spearman correlation of 0.634 with WordSim353.

\shortciteA{RadinskyAGM11} proposed the Temporal Semantic Analysis (\emph{TSA}), which expands the ESA method mentioned above by
adding a temporal dimension to the representation of a concept.
As in ESA, TSA represents terms as a weighted concept vector generated from a corpus.
However, for each concept, TSA extracts in addition a temporal representation using another corpus whose documents are divided into epochs (e.g., days, weeks, months, etc.).
With this extra corpus TSA calculates for each concept its ``temporal dynamics,'' which is its frequency in each epoch.
Given two terms, TSA computes their semantic relatedness by measuring the similarity between the temporal representation of their ESA concepts.
TSA obtained 0.82 correlation score with WordSim353, which is the best known result for WordSim353 using an \emph{unsupervised}
learning method.

\subsection{Structure-Free Methods}
Motivated by Kolmogorov complexity arguments, \citeA{GoogleSimilarityDistance} introduced a
novel structure-free semantic relatedness method, which is essentially a normalized co-occurrence measure.
This method, called the ``Google similarity distance,'' originally used the entire web as the unstructured corpus and
relied on a search engine to provide rough assessments of co-occurrence counts.
This method is extensively used in our work (without reliance on the entire web and search engines) and is
described in Section~\ref{Section:AdaptiveCoOccurrenceMeasure}.

Using term-document occurrence count matrix, \shortciteA{Deerwester90} used
Singular Value Decomposition (SVD) to compare the meaning of terms. Applying this measure,
\shortciteA{FinkelsteinGMRSWR01i}
achieved 0.56 correlation with WordSim353.\footnote{The implementation they used is available online at \url{http://lsa.colorado.edu}}

\citeA{DekangLin657297} proposed information-based methods to define and quantify term similarity.
\shortciteA{Dagan+Lee+Pereira:99a} and \citeA{Terra2003}
experimented with various statistical and information co-occurrence measures, such as mutual information, likelihood ratio, $L_1$ norm, and the KL- and Jensen-Shannon divergences, for estimating semantic relatedness from structure-free corpora.

\citeA{ReisingerMooney10} generated for a term $t$ a number of feature vectors, one for each context in which $t$ appears.
The feature vector of a certain context contains weights for all terms appearing with t in this
context and weights are calculated based on TF-IDF and $\chi^2$ scores.
These feature vectors were then clustered and cluster centroids were taken to represent the meaning of $t$.
Using this representation they considered various methods to calculate semantic relatedness of two terms according to similarity of their respective centroids.
This method obtained a correlation of 0.77 with WordSim353 by using combined centroids from clusterings of different resolutions.
This impressive performance is among the best known.

\subsection{Supervised Methods}
\label{Section:SupervisedSemanticRelatednessMethods}
All the semantic relatedness methods described in previous subsections, as well as many other published results not covered here,
can be framed as unsupervised learning techniques, whereby
the semantic relatedness scores emerge from the BK corpus, using some hand-crafted techniques without further human supervision.
There have been a few successful attempts to utilize \emph{supervised} learning techniques as well.
To the best of our knowledge, all of these works follow a similar methodology whereby the features of
a learning instance are assembled from scores obtained by various unsupervised methods (such as those discussed above).
Using this feature generation approach one then resorts to known inductive learning techniques such as
support vector machines (SVMs) \cite{SVM} to learn a classifier or a regressor.

\citeA{WikiRelate2006,PonzettoS07} used SVM regression applied to
instances whose features were constructed as a hybrid of
all the unsupervised techniques described above, which are based on WordNet or Wikipedia.
In addition, Strube and Ponzetto used the Jaccard measure \cite{IntroToIR} applied
to Google search results counts. Overall, their learning instances were comprised of 12 features (six Wikipedia-based scores, five WordNet-based scores
and one Google-based score). They employed a feature selection technique using a genetic algorithm
\shortcite{Mierswa2006},
and applied a standard model selection approach using grid search to identify useful hyper-parameters.
Overall, they obtained 0.66 correlation with the WordSim353 ground truth.

\shortciteA{2007:MSS:1242572.1242675} considered the \emph{semantic similarity} problem mentioned in Section~\ref{Section:Introduction}.
They constructed a feature vector for a given pair of terms by calculating four well-known co-occurrence measures
(Jaccard, Overlap/Simpson coefficient, Dice coefficient and mutual information) and lexico-syntactic templates
(e.g., `X of Y', `X and Y are', `X of Y'),
which were derived from page counts and snippets retrieved using a web search engine.
Bollegala et al. employed an SVM to classify whether two terms are synonyms or not.
The SVM was trained using examples that were taken from WordNet, considering terms from the same (resp., different)
synset as positive (resp., negative) examples.
The similarity between $t_1$ and $t_2$ was computed as a function of their feature vector's location relative to the SVM decision boundary.

\shortciteA{1620758} considered the binary classification problem of determining which pair among two term pairs is more
related to each other.
In their method,
each instance, consisting of two pairs, $\{t_1,t_2\}$ and $\{t_3,t_4\}$, is represented as a feature vector constructed using
 semantic relatedness scores and ranks from other (unsupervised) relatedness methods.
Specifically, they considered three structure-free semantic relatedness methods and one lexical semantic relatedness method
so that the overall feature vector for an
instance, is a 16-dimensional vector (four scores and four ranks for each term pair).
Using an SVM classifier they obtained 0.78 correlation with WordSim353.
The structure-free BK used for achieving this result consisted of four billion web documents.
They reported that the  overall computation utilized 2000 CPU cores for 15 minutes (approximately 20 days on one core).

Another attempt to utilize SVMs, where features are constructed using unsupervised scores, is reported
by \citeA{Haralambous2011}.
They considered the following unsupervised measures: (i) ESA \cite{GabrilovichM07}; (ii) a weighted shortest path measure based on
WordNet; and (iii) another co-occurrence measure, which is a variant of Jaccard's measure.
Using some combination of these three scores, they managed to achieve 0.7996
correlation with WordSim353. By training an SVM over a training set extracted from WordSim353 term pairs
(represented by these features) they achieved 0.8654 correlation with WordSim353. This is the best
correlation score that was ever reported.
Haralambous and Klyuev noted that this impressive result relies on optimizations
of the ESA hyper-parameters but the precise details of this optimization were not reported.

Both \shortciteA{1620758} and \citeA{Haralambous2011} achieved their
reported results using 10-fold cross validation, thus utilizing 90\% of the available labeled preferences for training.

To summarize, among these works the Agirre et al. approach is the closest to ours, mainly in its formulation of the learning problem.
However, our solution methodology is fundamentally different.

\section{Problem Setup}
\label{Section:ProblemSetup}

We consider a fixed corpus, $\cC \eqdef \{ c_1,c_2,\ldots,c_N \}$, defined to be a set of
contexts. Each \emph{context} $c_i$, $i=1,\ldots,N$, is a textual unit conveying some information in free text.
In this work we consider contexts that are sentences, paragraphs or whole documents.
Let $D \eqdef \{t_1,t_2,\ldots,t_d\}$
be a \emph{dictionary} consisting  of all the terms appearing in the corpus.
A term may be any frequent phrase (unigram, bigram, trigram, etc.) in the corpus, e.g., ``book'', ``New York'',
``The Holly Land.''
Ultimately, our goal is to automatically construct a function $f(t_1,t_2)$ that correctly ranks the relatedness of the terms
$t_1,t_2 \in D$ in accordance with the subjective semantics of a given labler.
We emphasize that we do not require $f$ to provide absolute scores but rather a relative values
inducing a complete order over the relatedness of all terms.

We note that in reality this total order assumption doesn't hold, since the comparison between two term pairs
not sharing any term might be meaningless.
Furthermore, human preferences may contain cycles, perhaps due to comparisons made using different features (as in the rock-paper-scissors game), or due to noise/confusion.
However, we impose total order for simplicity and it reduce the VC-dimension of our hypothesis class (see Section~\ref{Section:ALearningTheoreticalPerspective}).

\subsection{Learning Model}
\label{Section:LearningModel}
Our goal is to construct the function $f$ using supervised learning.
Specifically, the user will be presented with a training set
$\{X_1,\ldots,X_m\}$ to be labeled, where each $X_i \eqdef (\{t^i_1,t^i_2\},\{t^i_3,t^i_4\})$ is a quadruple of terms.
The binary label, $y_i \in \{\pm1\}$, of the instance $X_i$ should be $+1$ if the terms in the first pair $\{t^i_1,t^i_2\}$ are more related to each other than the terms in the second pair $\{t^i_3,t^i_4\}$, and $-1$ otherwise.
Each quadruple along with its label, $(X_i,y_i)$ is also called a \emph{preference}.

Among all possible quadruples, we restrict our attention only to quadruples in the set,
\begin{equation}
\label{eq:Dpref}
D_{pref} \eqdef \left\{  (\{t_1,t_2\},\{t_3,t_4\}) \left| 	
	\begin{array}{ll}
						& t_1,t_2,t_3,t_4 \in D,\\
						& t_1 \neq t_2, t_3 \neq t_4, \\
						& \{t_1,t_2\} \neq \{t_3,t_4\}.
	\end{array}
	\right. \right\}
\end{equation}
The reason to focus only on preferences $X \in D_{pref}$ is that any quadruple $X \in D^4 \setminus D_{pref}$ encodes a meaningless preference,
since the semantic relatedness of term pairs such as $\{t,t\}$ and preferences such as $(\{t_1,t_2\}, \{t_1,t_2\})$ are
trivial.

Denote by $S_m \eqdef \{(X_1,y_1), \ldots, (X_m,y_m)\}$, a set of labeled training examples received from the user.
We assume that if $(X,y) \in {S_m}$ then $(X,-y) \notin {S_m}$.
A binary classifier in our context is a function $h:D_{pref} \to \{\pm1\}$ satisfying, for all
$(\{t_1,t_2\},\{t_3,t_4\}) \in D_{pref}$, the ``anti-symmetry'' condition
\begin{equation}
\label{ref:cond}
h(\{t_1,t_2\},\{t_3,t_4\})=-h(\{t_3,t_4\},\{t_1,t_2\})
\end{equation}
The 0/1 \emph{training error} of $h$ is,
$$
R_m(h) \eqdef \frac{1}{m} \sum_i \I \{ h(X_i) \neq y_i \}.
$$
The standard underlying assumption in supervised learning is that
(labeled) instances are drawn i.i.d. from some unknown distribution $P(X,Y)$ defined over
$D_{pref} \times \{\pm 1\}$.
The classifier $h$ is chosen from some hypothesis class
$\cH$.
In this work we focus on the \emph{realizable setting} whereby
labels are defined by some unknown \emph{target hypothesis} $h^* \in \cH$.
Thus, the underlying distribution reduces to $P(X)$.
The performance of a classifier $h$ is quantified by its
true or (0/1) \emph{test error},
$$
R(h) \eqdef \E_P \{ h(X) \neq Y \}.
$$

\subsection{Learning from Preferences vs. Absolute Scores}
Why do we choose to ask the user about pairwise preferences rather than requesting an absolute relatedness score of a single pair
of terms? Our choice is strongly motivated by recent work showing that answers to such questions are more accurate than
answers to questions about absolute quality.
In order to extract an absolute score, a user
must rely on some implicit global scale, which may or may not exist.
We mention the papers \shortcite{Carterette08hereor,1414238,1718491} as a small sample
of studies that
justify this general approach both theoretically and empirically.


\section{Adaptive Co-occurrence Model}
\label{Section:AdaptiveCoOccurrenceMeasure}

Recognizing the widely accepted idea that the intensity of semantic relatedness between two terms is
a function of their co-occurrence pattern in textual documents,
we would like to somehow measure co-occurrence using a corpus of BK where such patterns are manifested.
Therefore, a major component of the proposed algorithm is an appropriate co-occurrence measure.
However, we also require adaptivity to specific user's subjective relatedness preferences.
Our observation is that such adaptivity can be accomplished by
learning from examples user specific weights to be assigned to contexts, as described blow.
Overall, our approach is to construct
a reasonable initial model, derived only from the BK corpus (without supervision),
which fits a rough general consensus on
relatedness of basic terms. This initial model is the starting point of a learning process that will refine the model to
fit specific user preferences.

In a preliminary study we examined various co-occurrence indices, such as Jaccard measure, pointwise mutual information, KL- and Jensen-Shannon divergences, and latent semantic analysis.
Based on this study and some published results \cite{Dagan+Lee+Pereira:99a,Terra2003,RecchiaJones2009},
we selected the normalized semantic distance measure of  \citeA{GoogleSimilarityDistance}.\footnote{Note that Cilibrasi and Vitanyi termed this function
``Google similarity distance'' and applied it by relying on Google to retrieve proxies for co-occurrence statistics. In our discussion co-occurrence statistics can be obtained in any desirable manner.}
Specifically, we observed that $\nsd$ by itself can achieve a high 0.745 Spearman correlation with WordSim353
(via our implementation using Wikipedia as the BK corpus)
thus providing a very effective starting point. We note that information measures are
also effective, but not quite as good.\footnote{Pointwise mutual information achieved correlation of
0.73 with WordSim353 \cite{RecchiaJones2009}.}
We also find it appealing that this measure was derived from solid algorithmic complexity principles.

Cilibrasi and Vitanyi defined
the \emph{semantics} $S(t_1, \dots ,t_n)$ of the terms $t_1, \dots ,t_n$,  as the set of all contexts
in which they appear together. Than they defined the \emph{normalized semantic distance ($\nsd$)} between $t_1, t_2$ to be
\begin{equation}
\label{ref:nsd}
\nsd \left( {{t_1},{t_2}} \right) \eqdef \frac{{\max \left\{ {\log \left( {\left| {S\left( {{t_1}} \right)} \right|} \right),\log \left( {\left| {S\left( {{t_2}} \right)} \right|} \right)} \right\} - \log \left( {\left| {S\left( {{t_1},{t_2}} \right)} \right|} \right)}}{{\log \left( Z \right) - \min \left\{ {\log \left( {\left| {S\left( {{t_1}} \right)} \right|} \right),\log \left( {\left| {S\left( {{t_2}} \right)} \right|} \right)} \right\}}},
\end{equation}
where $Z \eqdef \sum_{{t_1},{t_2} \in D} |S(t_1,t_2)|$.

The $\nsd$ function, like any other absolute scoring function for pairs, induces a permutation over
 all the term pairs, and therefore,
can be utilized as a classifier for semantic relatedness preferences, as required.
However, this classifier is constructed blindly without any consideration of the user's subjective preferences.
To incorporate user subjective preferences we introduce a novel extension of $\nsd$ that allows for assigning weights to contexts.
Define the \emph{weighted semantics} $WS(t_1, \dots ,t_n)$ of the terms $t_1, \dots ,t_n$
as
$$
WS(t_1,\ldots,t_n) \eqdef \sum_{c \in S(t_1,\ldots,t_n)} {w(c)},
$$
where $w(c) \in \mathbb{R}^+$ is a weight assigned to the context $c$, where
we impose the normalization constraint
\begin{equation}
\label{ref:WeightConstraint}
\sum_{c \in \C} w(c) = |\C| = N.
\end{equation}
Thus, given a BK corpus, $\cC = \{ c_1,c_2,\ldots,c_N \}$,
and a set $W$ of weights,
$$
W \eqdef \{ w(c_1), w(c_2),\ldots,w(c_N) \},
$$
we define \emph{weighted normalized semantic distance ($\wnsd$)} between $t_1$ and $t_2$ is,

\begin{center}
$\wnsd_{W}(t_1,t_2) \eqdef \frac{\max \{\log (WS(t_1)), \log(WS(t_2))\}-\log (WS(t_1,t_2))}{\log (Z) - \min \{\log (WS(t_1)), \log(WS(t_2))\}}$,
\end{center}
where $Z$ is a normalization constant,
$$
Z \eqdef \sum_{{t_1},{t_2} \in D} WS(t_1,t_2).
$$
We call the set $W$ of weights a \emph{semantic model} and our goal is to learn an appropriate model
from labeled examples.

Recall that our objective is to quantify the relatedness of two terms
in a ``universal'' manner, namely, regardless of the types of relations that link these terms.
Is it really possible to learn a single model $W$ that will encode coherent semantics universally for all terms and
all relations?

At the outset, this objective might appear hard or even impossible to achieve.
Additional special obstacle is the modeling of synonym relations.
The common wisdom is that synonym terms, which exhibit a very high degree of relatedness,
 are unlikely to occur in the same context \cite<see, e.g.,>{DekangLin657297,BudanitskyH06},
 especially if the context unit is very small (e.g., a sentence).
 Can our model capture similarity relations? We empirically investigate these questions in the
 Sections~\ref{Section:ExpLargeScale}, \ref{Section:ExpMediumScale}, \ref{Section:ExpSmallScale} and \ref{Section:ExpSubjectiveSR} where we
 evaluate the performance of our model via datasets that encompass term pairs with various relations.
 In addition, we further investigate the similarity relations via an ad-hoc experiment in Section~\ref{Section:SemanticSimilarityExp}.

\section{The SemanticSort Algorithm}
\label{Section:SemanticSortAlgorithm}

Let $f_W:D \times D \to \mathbb{R}^+$ be any adaptive co-occurrence measure satisfying the
following properties: (i) each context has an associated weight in $W$;
(ii) $f_W(t_1,t_2)$ monotonically increases with increasing weight(s) of context(s) in $S(t_1,t_2)$;
and (iii) $f_W(t_1,t_2)$ monotonically decreases with (increasing) weight(s) of context(s) in $S(t_1) \setminus S(t_1,t_2)$
or $S(t_2) \setminus S(t_1,t_2)$.

We now present a learning algorithm that can utilize any such function.
We later apply this algorithm while instantiating this function to
$\wnsd$, which clearly satisfies the required properties. Note, however, that many known co-occurrence measures
can be extended (to include weights) and be applied as well.

Relying on $f_W$ we would like utilize empirical risk minimization (ERM)
to learn an appropriate model $W$ of context weights so as to be
consistent with the training set $S_m$.
To this end we designed the following algorithm, called $\algname$,
which minimizes the training error over $S_m$
by fitting appropriate weights to $f_W$.
A pseudocode is provided in Algorithm~\ref{alg:SemanticSort}.

The inputs to $\algname$ are $S_m$, a learning rate factor $\alpha$, a learning rate factor threshold $\alpha_{max}$,
a decrease threshold $\epsilon$,
and a learning rate function $\lambda$.
When a training example is not satisfied, e.g.,
$e=(X=(\{t_1,t_2\},\{t_3,t_4\}),y=+1)$ and $f_{W}(t_1,t_2) < f_{W}(t_3,t_4)$),
we would like to increase the semantic relatedness score of $t_1$ and $t_2$ and decrease the semantic relatedness score of $t_3$ and $t_4$.
$\algname$ achieves this by multiplicatively promoting/demoting the weights of the ``good''/``bad'' contexts in which $t_1,t_2$ and $t_3,t_4$ co-occur.
The weight increase (resp., decrease) depends on $\lambdaUP$ (resp., $\lambdaDN$), which are defined as follows.
\begin{eqnarray*}
\lambdaUP &\eqdef& \frac{\alpha \cdot \lambda(\Delta_{e})+1}{\alpha \cdot \lambda(\Delta_{e})}\\
\lambdaDN &\eqdef& \frac{1}{\lambdaUP}.
\end{eqnarray*}
$\algname$ uses $\lambda$ to update context weights in accordance with the
error magnitude
incurred for example $e=(X=(\{t_1,t_2\},\{t_3,t_4\}),y)$, defined as
$$
\Delta_{e} \eqdef |f_{W}(t_1,t_2) - f_{ W}(t_3,t_4)|.
$$
Thus, we require that $\lambda$ is a monotonically decreasing function so that the greater $\Delta_{e}$ is, the more aggressive $\lambdaUP$ and $\lambdaDN$ will be.
The learning speed of the algorithm depends on these rates, and overly aggressive rates might prevent convergence due to oscillating semantic relatedness scores.
Hence, $\algname$ gradually refines the learning rates as follows.
Define
$$
\Delta  \eqdef \sum_{\mbox{\tiny $e$ is not satisfied}} {{\Delta _e}},
$$
as the total sum of the differences over unsatisfied examples.
We observe that if $\Delta$ decreases at least in $\epsilon$ in each iteration, then
$\algname$ converges and the learning rates remain the same.
Otherwise, $\algname$ will update the learning rate to be less aggressive
by doubling $\alpha$.
Therefore, we require that $0 < \epsilon$.
Note that the decrease of $\Delta$ is only used to control
convergence, but we test $\algname$ using the 0/1 loss function
as described in Section~\ref{Section:Evaluationmethodology}.
$\algname$ iterates over the examples until its hypothesis satisfies all of them,
or $\alpha$ exceeds the $\alpha_{max}$ threshold.
Thus, empirical risk minimization in our context is manifested by minimizing $\Delta$.

\begin{algorithm}
\caption{$\algname(S_m, \alpha, \alpha_{max}, \epsilon, \lambda$)}
\begin{algorithmic}[1]
 \STATE Initialize:
    \STATE $W \leftarrow \overrightarrow 1$
    \STATE $\Delta_{prev} \leftarrow Max Double Value$
 \REPEAT
 \STATE $\Delta \leftarrow 0$
 \FORALL{$e=((\{t_1,t_2\},\{t_3,t_4\}),y) \in S_m$}
    \IF{($y == -1$)}
        \STATE $(\{t_1,t_2\},\{t_3,t_4\}) \leftarrow (\{t_3,t_4\},\{t_1,t_2\})$
    \ENDIF
    \STATE $score_{12} \leftarrow f_{W}(t_1,t_2)$
    \STATE $score_{34} \leftarrow f_{W}(t_3,t_4)$
    \IF {($score_{12} < score_{34})$}
        \STATE \COMMENT{This is an unsatisfied example.}
        \STATE $\lambdaUP \leftarrow \frac{\alpha \cdot \lambda(\Delta_{e}) + 1}{\alpha \cdot \lambda(\Delta_{e})}$
        \STATE $\lambdaDN \leftarrow \frac{1}{\lambdaUP}$
        \STATE $\Delta \leftarrow \Delta + \Delta_{e}$
        \FORALL{$c \in S(t_1,t_2)$}
            \STATE $w(c) \leftarrow w(c) \cdot \lambdaUP$
        \ENDFOR
        \FORALL{$c \in S(t_3,t_4)$}
            \STATE $w(c) \leftarrow w(c) \cdot \lambdaDN$
        \ENDFOR
        \STATE Normalize weights s.t. $\sum_{c \in \cC}w(c)=|\cC|$
    \ENDIF
 \ENDFOR
 \IF{($\Delta - \Delta_{prev} + \epsilon \ge 0$)}
    \STATE $\alpha \leftarrow 2 \cdot \alpha$
    \IF{($\alpha \ge \alpha_{max}$)}
        \RETURN
    \ENDIF
 \ENDIF
 \STATE $\Delta_{prev} \leftarrow \Delta$
 \UNTIL{$\Delta == 0$}
\end{algorithmic}
\label{alg:SemanticSort}
\end{algorithm}

The computational complexity of $\algname$ is as follows. The model $W$ requires $\Theta(|\cC|)$ memory space, since each context is associated with a weight.
In addition, $\algname$ saves a mapping from any $t \in D$ to its $S(t)$. Thus, every occurrence of a term, $t$, in the corpus is represented in this mapping by the index of the relevant
context in $S(t)$.
Let $\nu(t)$ be the number of occurrences of $t \in D$ in the corpus, and define
$|\hbox{corpus}| = \sum_t \nu(t)$. Hence, this mapping
require $\Theta(|\hbox{corpus}|)$ space. Overall, the required space is
$$\Theta(|\cC|+ |\hbox{corpus}|) = \Theta(|\hbox{corpus}|),$$
for learning and classifying.
Our experiments in 64bit Java with 1.3GB filtered Wikipedia (using mainly hash tables) required $\approx$8GB RAM memory.
Due to the normalization constraint~(\ref{ref:WeightConstraint}), when we update a single context's weight, we influence
the weights of all contexts. Therefore,
each update due to unsatisfied example requires $\Theta(|\cC|)$ time complexity.
In the worst case scenario, each iteration requires $\Theta(|S_m| \cdot |\cC|)$. If we denote by $R$ (resp., $r$) the maximum (resp., minimum) semantic relatedness score of $f_W$ to any example in $S_m$,
then the maximum value of $\Delta$ is $(R - r)\cdot |S_m|$. In addition, with the exception of at most
$\log_2(\frac{\alpha_{max}}{\alpha})$ iterations,
$\Delta$ decreases every iteration by at least $\epsilon$.
It follows that
the maximum number of iterations is
$$
\Theta\left(\frac{(R - r)\cdot |S_m|}{\epsilon} + \log_2(\frac{\alpha_{max}}{\alpha})\right).
$$
Thus, the worst case time complexity of $\algname$ is
$$
\Theta\left(\left(\frac{(R - r)\cdot |S_m|}{\epsilon} + \log_2(\frac{\alpha_{max}}{\alpha})\right) \cdot |S_m| \cdot |\cC| \right).
$$
If we implement $\algname$ using $\wnsd$ then the normalization constraint~(\ref{ref:WeightConstraint}) is not necessary.
Let's denote $\alpha$ as the division factor of a certain normalization, $\wnsd_{before}$ as the semantic relatedness score before the normalization,
$\wnsd_{after}$ as the semantic relatedness score after the normalization,
$$
\max \{\log (WS(t_1)), \log(WS(t_2))\} \eqdef \log(WS(t_{max})),
$$
and
$$
\min \{\log (WS(t_1)), \log(WS(t_2))\} \eqdef \log(WS(t_{min})).
$$
We thus have,
\begin{eqnarray*}
\wnsd_{before} & \eqdef & \frac{\log(WS(t_{max})) - \log (WS(t_1,t_2))}{\log (Z) - \log(WS(t_{min}))} \\
& = & \frac{\left(\log(WS(t_{max})) - \log(\alpha)\right)- \left(\log (WS(t_1,t_2)) - \log(\alpha)\right)}{\left(\log (Z) - \log(\alpha)\right) - \left(\log(WS(t_{min})) - \log(\alpha)\right)} \\
& = & \frac{\log\left(\frac{WS(t_{max})}{\alpha}\right) - \log\left(\frac{WS(t_1,t_2)}{\alpha}\right)}{\log\left(\frac{Z}{\alpha}\right) - \log\left(\frac{WS(t_{min})}{\alpha}\right)} \\
& \eqdef & \wnsd_{after}.
\end{eqnarray*}
Hence, the worst case time complexity of $\algname$ using $\wnsd$ is
$$
\Theta\left(\left(\frac{(R - r)\cdot |S_m|}{\epsilon} + \log_2(\frac{\alpha_{max}}{\alpha})\right) \cdot |S_m| \right).
$$
We emphasize that this is a worst case analysis.
In practice, the precise time complexity is mainly dependent on
the number of training errors.
Assuming that computing $f_{W}(t_1,t_2)$ depends only on $S(t_1)$ and $S(t_2)$,
this computation requires $\Theta(|S(t_1) \cup S(t_2)|)$ time complexity.
Thus, classifying an instance $(\{t_1,t_2\}, \{t_3,t_4\})$ requires $\Theta(|S(t_1) \cup S(t_2) \cup S(t_3) \cup S(t_4)|)$ time.
If we denote by $M$ the total number of unsatisfied examples encountered by $\algname$
during training, and assuming that in our BK corpus, $S(t) \ll |\cC|$ for every term, then the overall time complexity of the learning process is $\Theta(M \cdot |\cC|)$
($\Theta(M)$ using $\wnsd$), since the overall time required to process satisfied instances is negligible.
Finally, we note that in all our experiments the total number of iterations was at most 100, and it was always the case
that $M < |S_m|$.

\section{Empirical Evaluation}
\label{Section:EmpiricalEvaluation}
To evaluate the effectiveness of $\algname$ we conducted several experiments. One of the barriers in designing these experiments is the lack of
labeled dataset of term quadruples as required by our model.  The common benchmark datasets are
attractive because they were labeled by human annotators, but these datasets are rather small.
When considering a small real world application involving even 500
vocabulary terms, we need to be able to compare the relatedness of many
of the $\binom{500}{2} = 124,750$ involved pairs. However, the largest available dataset, WordSim353, contains only $353$
pairs\footnote{In effect there are 351 pairs since each of the
pairs \texttt{money -- bank} and \texttt{money -- cash} appear twice, with two different scores. In our experiments we simply merged them and used average scores.}
over its $434$ unique vocabulary terms.

\subsection{The GSS Dataset}
Although we utilized all available datasets in our experiments (see below), we sought a benchmark of significantly larger scale in order to approach
real world scenarios. In such scenarios where the vocabulary is large our resources limit us to train $\algname$ only on negligible fraction from the available preferences
(the largest fraction is about $10^{-5}$). As opposed to these available humanly annotated, where we examined $\algname$ ability to learn human preferences,
the larger dataset has a different objective: Verify if learning can be achieved while leveraging such a tiny statistical fraction of the dataset.
Furthermore, we want this large dataset to still be positive correlated to human semantic preferences as a sanity check.

Without access to a humanely annotated dataset of a large scale, we synthesized a labeled dataset as follows.
Noting that a vocabulary of 1000-2000 words covers about 72\%-80\% of written English texts \cite{francis1982}, we
can envision practical applications involving vocabularies of such sizes.
We therefore selected a dictionary $D_n$ consisting of the $n$ most frequent English words ($n = 500, 1000$).
For each of the $\binom{n}{2}$ term pairs over $D_n$ we used an independent corpus of English texts, namely the Gutenberg Project,
 to define the semantic relatedness score of pairs, using the $\nsd$ method, applied with sentence based contexts.
We call this scoring method the \emph{Gutenberg Semantics Score (GSS)}.

Project Gutenberg is a growing repository that gathers many high quality and classic literature that is
freely available on the web. For example, among the books one can find
\texttt{Alice's Adventures in Wonderland}, \texttt{The Art of War}, \texttt{The Time Machine}, \texttt{Gulliver's Travels}, and many well
known fiction ebooks.
Currently, Project Gutenberg offers over 36,000 ebooks.\footnote{These ebooks appear in many formats such as
HTML, EPUB, Kindle, PDF, Plucker, free text, etc.}
In this work we used a complete older version of Project Gutenberg from February 1999 containing only 1533 texts
bundled by Walnut Creek CDROM. We didn't try to use any other version and we used this old and small version merely because it was in our possession
and it served the purpose of our experiments as mention above. We believe that any version can be utilized as there is no problem in $\algname$ which prevents us
from using any different version or other textual corpus?

While GSS is certainly not as reliable as human generated score (for the purpose of predicting human scores),
we show below that GSS is positively correlated with human annotation, achieving 0.58 Spearman correlation with the WordSim353 benchmark.
Given a set of term pairs together with their semantic relatedness scores (such as those generated by GSS), we construct a labeled set of preferences
according to semantic relatedness scores (see definitions in Section~\ref{Section:ProblemSetup}).

We emphasize that the texts of the Project Gutenberg were taken conclusively and as is, without any modifications, to avoid
any selection bias.\footnote{The GSS dataset will be made publicly available.}
Nevertheless, despite its statistical correlation to human annotation,
our main objective is not to evaluate absolute performance scores,
but rather to see if generalization can be accomplished by $\algname$ at this scale, and in particular,
 with an extremely small fraction of the available training examples.

\subsection{Background Knowledge Corpora}
An integral part of the $\algname$ model is its BK corpus.
We conducted experiments using two corpora.
The first corpus is the snapshot of Wikipedia from 05/11/05 preprocessed using Wikiprep.\footnote{Wikiprep is an XML preprocessor for Wikipedia,
available at \url{http://www.cs.technion.ac.il/~gabr/resources/code/wikiprep}.} We used this old version of Wikipedia only because it was already available preprocessed and,
as mention in the previous section, we saw no importance of choosing one version over the other as anyone will do.\footnote{Wikipedia's dump is contains many macros that need to be processed in order to achieve the raw text.}
Following \cite{GabrilovichM07}, in order to remove small and overly specific articles, we filtered out articles containing
either less than 100  non-stopword terms and/or less than 5 incoming links and/or less than 5 outgoing links.
The second corpus we used is the Project Gutenberg mentioned above.
We emphasize that in all experiments involving GSS scores only Wikipedia was used as the BK corpus.
Also, in each experiment we either used Wikipedia or Gutenberg as a BK corpus and not both.
In all the experiments we ignored stopwords and stemmed the terms using Porter's stemmer.\footnote{Porter's stemmer is available at
\url{http://tartarus.org/~martin/PorterStemmer}.}
Finally, We considered three types of contexts: sentences, paragraphs and whole documents.
Sentences are parsed using `.' as a separator without any further syntax considerations;
paragraphs are parsed using an empty line as a separator.
No other preprocessing, filtering  or optimizations were conducted.
After some tuning, we applied $\algname$ with the following hyper-parameters that gave us the best result:  $\alpha = 1$, $\alpha_{max} = 32$, $\epsilon = 0.0001$, and\footnote{Our brief attempts with various continuous functions (linear or exponential) were not as successful. Thus, we used them because they provided the best performance.}
$$
\lambda(\Delta_e) = \left\{
             \begin{array}{ll}
               4, & \hbox{if $\Delta_e \ge 0.1$;} \\
               8, & \hbox{if $0.1 > \Delta_e \ge 0.04$;} \\
               16, & \hbox{if $0.04 > \Delta_e \ge 0.005$;} \\
               32, & \hbox{otherwise.}
             \end{array}
           \right.
$$

\subsection{Evaluation Methodology}
\label{Section:Evaluationmethodology}
Consider a collection $P$ of preferences, where  each preference is a quadruple, as define in Section~\ref{Section:ProblemSetup}.
When we evaluate performance of the algorithm w.r.t. a training set of size $m$, we choose an $m$-subset,
$S_m \subseteq P$ uniformly at random.
The rest of the preferences in $P \setminus S_m$ are taken as the test set.\footnote{Formally speaking, this type of sampling
without replacement of the training set, is within a standard transductive learning model \cite[Sec. 8.1,Setting 1]{Vapnik98} .}
However, if $P \setminus S_m$ remains very large, only 1,000,000 preferences, chosen uniformly at random from $P \setminus S_m$, are
taken for testing.
The training set $S_m$ is fed to $\algname$. The output of the algorithm is an hypothesis $h$, consisting of a weight vector $W$
that includes a component for each context in $\cC$.
Then we apply the hypothesis on the test set and calculate the resulting
accuracy (using the 0/1 loss function). This quantity provides a relatively accurate estimate of (one minus) the true error $R(h)$.
In order to obtain a learning curve we repeat this evaluation procedure for a monotonically increasing sequence of training set sizes.
The popular performance measure in semantic relatedness research is the Spearman correlation coefficient of the ranking obtained by
the method to the ground truth ranking. Therefore, we also calculated and reported it as well.
In addition, in order to gain statistical confidence in our results, we repeated the experiments for each
training set size multiple times and reported the average results.
For each estimated average quantity along the leaning curve we also calculated
its standard error of the mean (SEM), and depicted the resulting SEM values as error bars.

\subsection{Experiment 1: large scale}
\label{Section:ExpLargeScale}
\begin{figure}
\centerline{
\psfig{figure=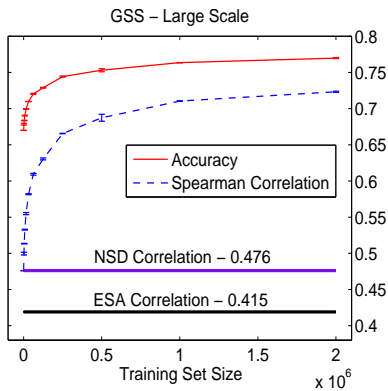, height=2in, width=2in}
}
\caption{Experiment 1 (large scale) - Learning curves for test accuracy (solid) and test correlation (dashed), with standard error bars.
Lower horizontal line at 0.415 marks the performance of ESA \cite{Gabrilovich2005}.
Upper horizontal line at 0.476 marks the performance of $\nsd$ \cite{GoogleSimilarityDistance}.}
\label{Figure:GSSLargeScale}
\end{figure}
In order to evaluate $\algname$ on ambitious, large scale and quite realistic scenario, we conducted the following experiments.
Taking $D_{1000}$ (the top 1000 most frequent terms in Wikipedia) we considered all possible preferences.
Note that the number of preferences associated with $D_{1000}$ is huge, containing
about $10^{12}/4$ quadruples. We labeled the preferences according to GSS as described above.
In generating the learning curve we were only able to reach $m = 2,000,000$ training examples, thus utilizing
an extremely small fraction of the available preferences (the largest fraction is about $10^{-5}$).
Figure~\ref{Figure:GSSLargeScale} presents 0/1 test accuracy and Spearman correlation learning curves.
On this figure we also mark the results obtained by two unsupervised methods: (i) $\nsd$ using Wikipedia as BK corpus
 with paragraph level contexts;
(ii) the well known ESA method using the same filtered Wikipedia snapshot mentioned above.
Both these unsupervised performance scores were calculated by us using our implementations of these methods.
It is evident that $\algname$ successfully generalized the training sample and accomplished a notable improvement over its starting point.
We believe that these results can serve as a proof of concept and
confirm $\algname$'s ability to handle real world challenges.

\subsection{Experiment 2: medium scale}
\label{Section:ExpMediumScale}
We repeated the previous experiment now with $D_{500}$, taken to be subset of 500 terms from $D_{1000}$ chosen uniformly at random.
All other experiment parameters were precisely as in Experiment 1. The resulting learning curves are shown in
Figure~\ref{Figure:GSSMediumScale}.
Clearly, this medium scale problem gave rise to significantly higher absolute performance scores.
We believe that the main reason for this improvement (with respect to the large scale experiment)
is that with $D_{500}$ we were able to utilize a larger fraction of preferences in training.

\begin{figure}[h!tb]
\centerline{
\psfig{figure=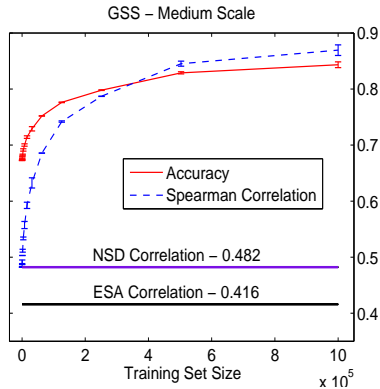 , height=2in, width=2in}
}
\caption{Experiment 2 (medium scale) - Learning curves for test accuracy (solid) and test correlation (dashed),with standard error bars.
Lower horizontal line at 0.416 marks the performance of ESA \cite{Gabrilovich2005}.
Upper horizontal line at 0.482 marks the performance of $\nsd$ \cite{GoogleSimilarityDistance}.}
\label{Figure:GSSMediumScale}
\end{figure}

\subsection{Experiment 3: small scale}
\label{Section:ExpSmallScale}

As mentioned in Section~\ref{Section:RelatedWork}, most of the top performing known techniques,
including the reported supervised methods, evaluated  performance with respect to the WordSim353 benchmark.
In order to link the proposed approach to the current literature
we also conducted an experiment using WordSim353 as a source for labeled preferences.
This experiment serves three purposes.
First, it can be viewed as a sanity check for our
method, now challenging it with humanly annotated scores.
Second, it is interesting to examine the performance advantage of our supervised approach vs. no systematic supervision
as obtained by the unsupervised methods (we already observed in Experiments 1\&2 that our supervised method
can improve the scores obtained by ESA and $\nsd$).
Finally, using this experiment we are able compare between $\algname$ and the other known supervised methods that so far have been relying
on SVMs.

\begin{figure}
\centerline{
\psfig{figure=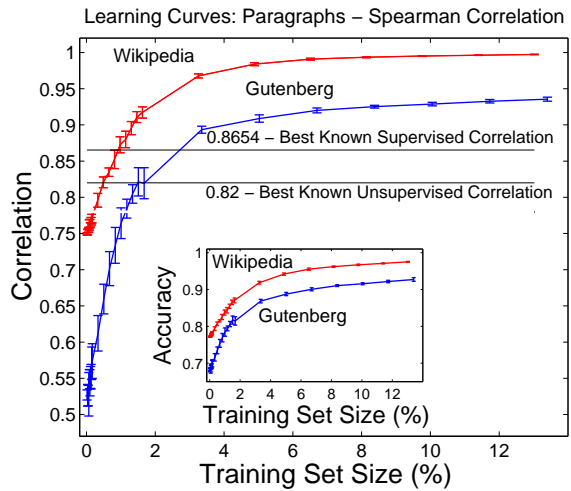, width=3in }
}
\caption{Experiment 3 (small scale) - Learning curves for test correlation and test accuracy, with standard error bars using
either Wikipedia or Gutenberg. Lower horizontal line at 0.82 marks the best known unsupervised result for WordSim353 \cite{RadinskyAGM11}.
Upper horizontal line at 0.8654 marks the best known supervised result for WordSim353 \cite{Haralambous2011}.}
\label{Figure:ParagraphContext-Both}
\end{figure}

Figure~\ref{Figure:ParagraphContext-Both} shows the learning curves obtained by $\algname$ applied with paragraph contexts
using either Wikipedia or Gutenberg (but not both together) as a BK corpus.
The lower horizontal line, at the 0.82 level, marks the best known \emph{unsupervised} result obtained for WordSim353 \cite{RadinskyAGM11}.
The upper horizontal line, at the 0.8654 level, marks the best known \emph{supervised} result \cite{Haralambous2011}.
It is evident that quite rapid learning is accomplished using either the Wikipedia or the Gutenberg models, but Wikipedia enables significantly faster learning and smaller sample complexity for each error level. The curves in the internal panel show the corresponding
test \emph{accuracies} (0/1 loss) for the same experiments.
Note that meaningful comparisons between $\algname$ and the other (SVM based) supervised methods (described in Section~\ref{Section:SupervisedSemanticRelatednessMethods})
can  only be made when considering the same train/test partition sizes.
Unlike our experimental setting, both \shortciteA{1620758} and \citeA{Haralambous2011} achieved their
reported results (0.78 and 0.8654 correlation with WordSim353, respectively) using 10-fold cross validation, thus utilizing
90\% of the available labeled preferences for training.
When considering only the best results obtained at the top of the learning curve, $\algname$ outperforms
the best reported supervised performance after consuming 1.5\% of all the available WordSim353 preferences using the Wikipedia model and
after consuming 3\% of the preferences using the Project Gutenberg model.

\begin{figure}[htb]
\centerline{
\psfig{figure=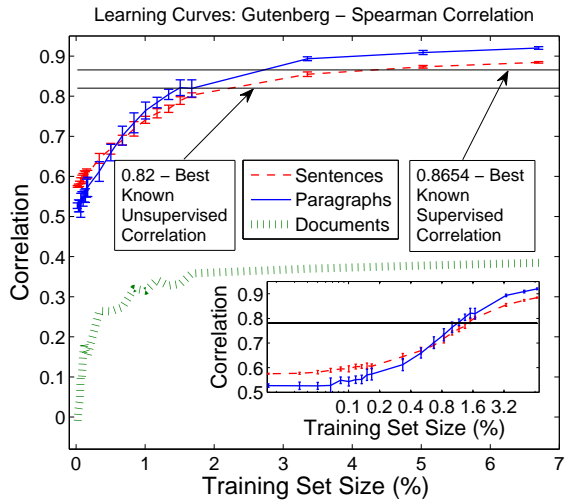,width=3in}
}
\caption{Experiment 3 (small scale) - Learning curves for test correlation with standard error bars using
Project Gutenberg applied with sentences, paragraphs and whole document as context types.
Lower horizontal line at 0.82 marks the best known unsupervised result for WordSim353 \cite{RadinskyAGM11}.
Upper horizontal line at 0.8654 marks the best known supervised result for WordSim353 \cite{Haralambous2011}.
The internal panel zooms into the same curves of sentence- and paragraph-based semantic relatedness, now with logarithmic $X$-axis.}
\label{Figure:Gutenberg-Spearman}
\end{figure}

Figure~\ref{Figure:Gutenberg-Spearman} depicts three Gutenberg learning curves: one for each context type.
The internal panel zooms into the same curves of sentence- and paragraph-based contexts, now with logarithmically scaled
$X$-axis to emphasize their differences.
As before, the lower (resp., upper) horizontal line at 0.82 (resp., 0.8654) marks the best known unsupervised (resp., supervised) result for WordSim353 \cite{RadinskyAGM11} \cite<resp.,>{Haralambous2011}.
Clearly, paragraph contexts exhibit the best test performance for almost all training set sizes.
In contrast, contexts consisting of whole documents perform poorly, to the extent that even after utilizing the largest training set size,
they are still way behind sentences and paragraphs (even without using a single labeled example).
A similar comparison (not presented) for Wikipedia contexts showed entirely different picture with all contexts exhibiting very similar
(and almost indistinguishable) performance as shown for paragraphs in Figure~\ref{Figure:ParagraphContext-Both}.

\subsection{Experiment 4: subjective semantic relatedness}
\label{Section:ExpSubjectiveSR}
To examine the ability of $\algname$ to adapt to subjective semantic relatedness ranking,
we created two new synthetic sets of semantic relatedness scores to all WordSim353 pairs:
\begin{enumerate}
\item[(i)] A \emph{Wikipedia set} of scores that was calculated using paragraph-based $\nsd$ over Wikipedia;
\item[(ii)] A \emph{Gutenberg set} that  was generated using paragraph-based $\nsd$ over the Gutenberg corpus.
\end{enumerate}
We consider these two sets as proxies for two different ``subjective'' semantic relatedness preferences.\footnote{Indeed, these two sets exhibited numerous significantly different semantic relatedness valuations. For example,
\texttt{nature} and \texttt{environment} received high score in Wikipedia but very low score in Gutenberg,
 and \texttt{psychologist} and \texttt{fear} were much more similar in Gutenberg than in Wikipedia.}
Table~\ref{Table:SyntheticExp} outlines two learning curves: the first corresponds to
learning the Gutenberg preferences using Wikipedia as the BK corpus, and the second, for learning
the Wikipedia preferences using Gutenberg as BK.
It is evident that in both cases $\algname$ successfully adapted to these subjective preferences
achieving excellent test performance in both cases.
  \begin{table}
  \caption{Experiment 4 (subjective semantic relatedness) - Spearman Correlation.}
\label{Table:SyntheticExp}
\begin{center}
    \begin{tabular}{ | c || c | c | c | c | c | c |}
  \hline
  Training test size (\%) & 0 & 0.5 & 1 & 2 & 4 & 8\\
  \hline
  \hline
  Wiki learns Gutenberg & 0.65 & 0.77 & 0.85 & 0.93 & 0.97 & 0.99\\
  Gutenberg learns Wiki & 0.62 & 0.73 & 0.82 & 0.89 & 0.94 & 0.96\\
  \hline
    \end{tabular}
\end{center}
    \end{table}

\subsection{Experiment 5: semantic similarity}
\label{Section:SemanticSimilarityExp}
Synonymous relations are considered among the most prominent semantic relations. \emph{Semantic similarity} is a sub-domain
of semantic relatedness where one attempts to assess the strength of synonymous relations.
A widely accepted approach to handle synonyms (and antonyms) is via distributional similarity \cite{DekangLin657297,BudanitskyH06}.
In this approach, to determine the similarity of terms $t_1$ and $t_2$ we consider $D(t_1)$ and $D(t_2)$, the ``typical''
distributions of terms in close proximity to $t_1$ and $t_2$, respectively. It is well known that these distributions tend to resemble
whenever $t_1$ is similar to $t_2$, and vice versa.
In contrast,  $\algname$ computes its similarity scores based on co-occurrence counts, and the conventional wisdom is
that synonyms tend not to co-occur. A natural question then is how well and in what way can  $\algname$  handle synonymous relations.

In this section we examine and analyze the behavior of $\algname$ on a specialized semantic similarity task.
To this end, we use the semantic similarity datasets, namely R\&G and M\&C, which are introduced and described
in Section~\ref{Section:StandardBenchmarkDatasets}.

\begin{figure}[htb]
\centerline{
\psfig{figure=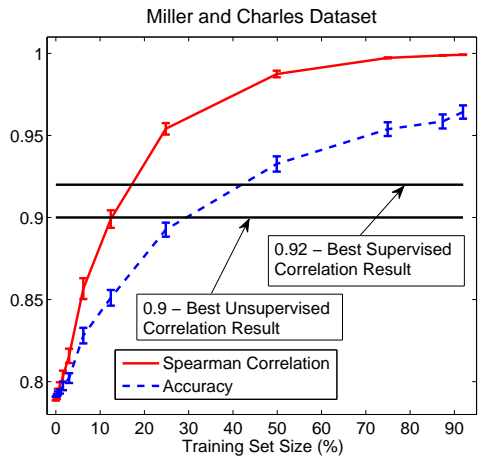, width=2.5in}
}
\caption{Experiment 5 (semantic similarity with Miller \& Charles dataset) - Learning curves for test correlation (solid)
and test accuracy (dashed) with standard error bars.
Lower horizontal line at 0.9 marks the best known unsupervised results \cite{TKDE2003,HughesR07}.
Upper horizontal line at 0.92 marks the best known supervised result \cite{1620758}.
}
\label{Figure:MillerCharlesDatasetResults}
\end{figure}

\begin{figure}[htb]
\centerline{
\psfig{figure=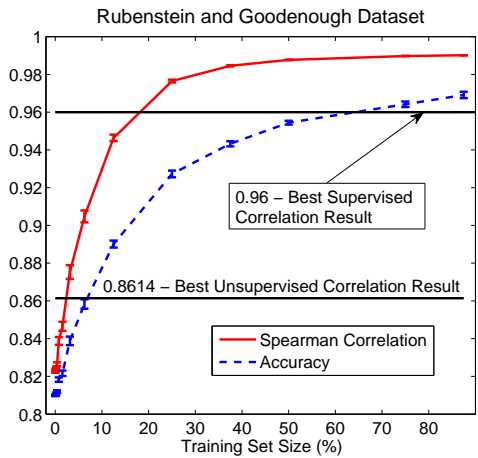, width=2.5in}
}
\caption{Experiment 5 (semantic similarity with Rubenstein and Goodenough dataset) - Learning curves for
test correlation (solid) and test accuracy (dashed) with standard error bars.
Lower horizontal line at 0.8614 marks the best known unsupervised result \cite{TsatsaronisVV10}.
Upper horizontal line at 0.96 marks the best known supervised result \cite{1620758}.
}
\label{Figure:GoodenoughDatasetResults}
\end{figure}

Figure~\ref{Figure:MillerCharlesDatasetResults} depicts the results obtained for the M\&C dataset.
The lower horizontal line, at the 0.9 level, marks the best known \emph{unsupervised} results obtained for M\&C dataset \cite{TKDE2003,HughesR07}.
The upper horizontal line, at the 0.92 level, marks the best known \emph{supervised} result obtained for M\&C dataset \cite{1620758}.
Figure~\ref{Figure:GoodenoughDatasetResults} depicts the results obtained for the R\&G dataset.
The lower horizontal line, at the 0.8614 level, marks the best known \emph{unsupervised} results obtained for R\&G dataset \cite{TsatsaronisVV10}.
The upper horizontal line, at the 0.96 level, marks the best known \emph{supervised} result obtained for R\&G dataset \cite{1620758}.
The learning curves depicted in both figures clearly indicate that learning synonyms using our method is an achievable task, and in fact,
can improve upon the distributional similarity methods.
While synonyms and antonyms co-occur infrequently, they still do co-occur.
It is a nice property of our model that it can leverage these sparse co-occurrence counts
and accurately detect synonyms by sufficiently increasing the weights of their mutual contexts.

\section{Model Interpretability}
The semantic model learned by $\algname$ is encoded in its weight vector $W$.
In this section we summarize our initial study to explore the model $W$ and gain some insight into its structure.
Are the weights in $W$ ``arbitrarily'' optimized to reduce the training error, or is it the case that they are
organized in a meaningful and interpretable manner?
Can we learn from $W$ something about the human rater(s) who tagged the training set?
Can we say something about their world knowledge and/or intellectual interests?

Trying to answer the above questions we conducted the following preliminary study.
While the results we obtained are not sufficient for fully answering the above questions, they are indicative and suggest
that the semantic model $W$ contains useful information that can be interpreted and perhaps even be utilized in applications.
In our experiments, due to the absence of human annotating resources,
we again synthesized a ``human rater'' whose knowledge is focused on a specific topic.

Given a specific topic $T$ in Wikipedia (e.g., sports) we extracted the set $S_T$ of documents pertaining to $T$ (using the Wikipedia
topic tags), and partitioned $S_T$ uniformly at random into two subsets, $S^1_T$ and $S^2_T$.
The subset $S^1_T$ was used for labeling, and $S^2_T$ was used as part of the BK corpus together with the rest of
the Wikipedia corpus.
Our synthetic rater annotated preferences based on $\nsd$ applied over $S^1_T$, whose articles were partitioned to paragraph units.
We call the resulting semantic preferences the \emph{$T$-semantics}.

Taking $D_{1000}$ as a dictionary, we generated a training set by sampling uniformly at random
$m = 2,000,000$ preferences, which were tagged using the  $T$-semantics.
We then applied $\algname$ to learn the $T$-semantics using this training set
while utilizing $S^2_T$ (as well as the rest of Wikipedia) as a BK corpus, whose documents were parsed
to the paragraph level as well. We then examined the resulting $W_T$ model.

\begin{table}
\begin{center}
 \resizebox{12cm}{!} {
\begin{tabular}{ | c || c | c || c | c || c | c || c | c |}
  \hline
  \multirow{2}{*}{ \# } & \multicolumn{2}{c||}{play} &  \multicolumn{2}{c||}{player} & \multicolumn{2}{c||}{record} &\multicolumn{2}{c|}{club}\\
  \cline{2-9}
  & Music & Sports & Music & Sports & Music & Sports & Music & Sports \\
  \hline
  1 & band & game & instrument & play & release & set & dance & football \\
  2 & guitar & team & play & league & album & season & night & league \\
  3 & instrument & season & replace & game & label & win & heart & cup \\
  4 & perform & player & join & season & band & career & fan & play \\
  5 & time & football & guitar & born & song & finish & local & divis \\
  6 & year & first & technique & team & first & run & house & season \\
  7 & role & score & key & football & new & game & London & manage \\
  8 & tour & club & example & professional & studio & won & scene & success \\
  9 & two & year & football & baseball & production & score & mix & found \\
  10 & new & career & hand & major & sign & second & radio & player \\
  \hline
\end{tabular}
}
\end{center}
\caption{Model Interpretability - Top 10 related terms according to Music and Sports Semantics.}
\label{Table:MusicSportSemantics}
\end{table}

Two topics $T$ were considered: Music and Sports, resulting in two models:
$W_{music}$ and $W_{sports}$.
In order to observe and understand the differences between these two models,
we identified and selected, before the experiment, a few target terms that have ambiguous meanings with respect to Music and Sports.
The target terms are:
\begin{quote}
\begin{center}
\texttt{play}, \ \ \texttt{player}, \ \ \texttt{record}, \ \ \texttt{club}.
\end{center}
\end{quote}
Table~\ref{Table:MusicSportSemantics} exhibits the top 10 most related terms to each of the target terms according
to either $W_{music}$ or $W_{sports}$.
It is evident that the semantics portrayed by these lists are quite different and nicely
represent their topics as we may intuitively expect.
The table also emphasizes the inherent subjectivity in semantic relatedness analyses, that should be accounted for
when generating semantic models.

Given a topical category $C$ in wikipedia, and a hypothesis $h$, we define
the \emph{aggregate $C$-weight according to $h$},
to be the sum of the weights of all contexts that belong to an article that is categorized into $C$ or its Wikipedia
sub-categories.
Also, given a category $C$, we denote by $h_{init}^C$, its initial hypothesis and by $h_{final}^C$, its final hypothesis (after learning).\footnote{The initial
hypotheses vary between topics if their respective BK corpora are different.}
In order to evaluate the influence of the labeling semantics on $h_{final}^C$,
we calculated, for each topic $T$ the difference between its aggregate $C$-weight according
to $h_{init}^C$ and according to $h_{final}^C$.

\begin{figure}[htb]
\centerline{
\psfig{figure=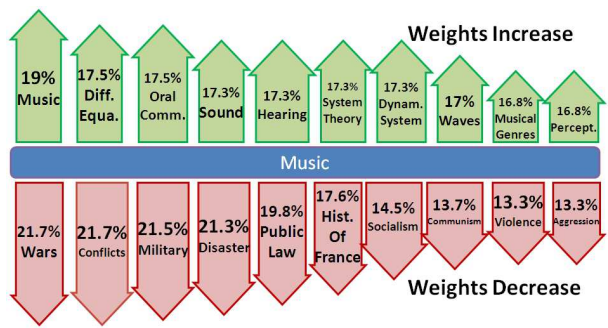,width=8cm}
}
\caption{Model Interpretability - Weights increase (upper/green) and decrease (lower/red) of Wikipedia's major categories according to Music hypotheses.}
\label{Figure:MusicWeightsChange}
\end{figure}[htb]
\begin{figure}
\centerline{
\psfig{figure=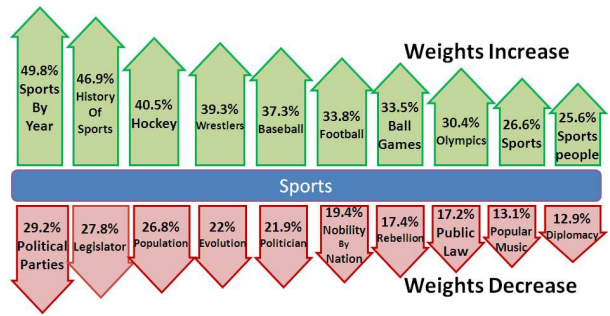,width=8cm}
}
\caption{Model Interpretability - Weights increase (upper/green) and decrease (lower/red) of Wikipedia's major categories according to Sports hypotheses.}
\label{Figure:SportsWeightsChange}
\end{figure}

Figures~\ref{Figure:MusicWeightsChange}~and~\ref{Figure:SportsWeightsChange} present the increase/decrease in those aggregate $C$-weights for
Wikipedia's major categories $C$.
In both cases of labeling topics, Music or Sports, it is easy to see that, by and large, the aggregate weights of
categories  that are related to the labeling topic were increased,
while weights of unrelated categories were decreased.
Surprisingly, when considering the Music topic, many mathematical categories
dramatically increased their weight.\footnote{Indeed, both Music and Mathematics share a large vocabulary.
Furthermore, it is  common wisdom that successful mathematicians are often also accomplished musicians and vice versa.}
To summarize, it is clear that $\algname$ successfully
identified the intellectual affiliation of the synthesized labeler.

While these results aren't conclusive (and can be viewed as merely anecdotal), we believe that
they do indicate that the automatically emerging weights in the model $W$ are organized
in a meaningful and interpretable manner, which encodes the labeling semantics as a particular weight distribution
over the corpus topics.
In addition, not only did $\algname$ identify the labeler BK, it also unexpectedly revealed related topics.

\section{A Learning-Theoretic Perspective}
\label{Section:ALearningTheoreticalPerspective}

Here we would like to present some initial thoughts on the learnability of semantic relatedness.
Classic learning-theoretic considerations ensure that generalization will be achieved if the hypothesis class
$\cH$ will be sufficiently expressive to allow fitting of the training set, but still appropriately restricted
to avoid overfitting. Appropriate fitting is of course a function of the hypothesis class expressiveness
and the training sample
size.
Assuming a realizable (noise-free) setting, a classical result in statistical learning
theory is that
any consistent learning algorithm
(that perfectly fits the training set) will require a sample complexity of,
$$
O\left(\frac{d}{\eps}\log\left(\frac{1}{\eps}\right) + \frac{1}{\eps}\log\left(\frac{1}{\delta}\right)\right),
$$
to achieve error $\eps$ with probability $1-\delta$ over random choices of the training set.
Here, $d = VCdim (\cH)$, is the VC-dimension of $\cH$ (see, e.g., \cite{AnthoB99}).
Conversely, it has been shown (for particular worst case distribution and hypothesis class) that,
$$
\Omega\left(\frac{d}{\eps} + \frac{1}{\eps}\log\left(\frac{1}{\delta}\right)\right)
$$
examples are necessary.
Thus, the VC-dimension is a necessary and sufficient dominating factor that will determine the required training sample size
if we seek a distribution free bound.

We now show that in our context, a completely unrestricted hypothesis class, $\cH_{all}$, whose hypotheses only
satisfy the ``anti-symmetry'' condition (\ref{ref:cond}),
is completely useless,
because its VC-dimension is
$\binom{ \binom{|D|}{2} }{2} = \Theta \left(|D|^4 \right)$.
Therefore, using $\cH_{all}$ is of course a triviality because the number of quadruples in $D_{pref}$
is exactly $2 \cdot \binom{ \binom{|D|}{2} }{2}$ so there is nothing that could be gained by sampling (the proof is provided in Appendix A).

We now consider the hypothesis class $\cH_{\pi}$ of permutations over term pairs.
Each hypothesis in this class is in essence a
full order over the pairs.
It is not hard to prove that the VC-dimension of this class is $\binom{|D|}{2}-1 = \Theta(|D|^2)$
(see the proof in Appendix B).

The set of permutations, with its $\Theta(|D|^2)$ VC-dimension, provides a substantial improvement over the set of all (anti-symmetric)
hypotheses. However, this dimension is still quite large, and requires huge resources
for gathering sufficiently large training sets.
In contrast, we already observed the ability of $\algname$ to learn semantic relatedness preferences quite well
with relatively small training sets.
Can this be explained using VC dimension arguments?

While currently we don't know how to explicitly evaluate the capacity of the hypothesis space induced by $\algname$,
we observe that the use of a BK corpus through the $\nsd$ measure, provides further capacity reductions
by placing many constraints on the set of allowable permutations.
For example, observe that $\algname$ only updates the weights of contexts that include both terms in a given pair,
so it cannot change the semantic relatedness score of terms that do not co-occur; hence, the relative order of unrelated terms is predetermined.
Moreover, considering the structure of the $\nsd$ measure we know
the following lower bound on the semantic relatedness score\footnote{The higher the score is the less the terms are related.}
of two terms, $t_1$ and $t_2$,
$$
\frac{\max \{\log (WS(t_1)), \log(WS(t_2))\}}{\log (Z) - \min \{\log (WS(t_1)), \log(WS(t_2))\}}.
$$
It follows that $\algname$ has limited freedom in reducing semantic relatedness scores.

Finally, $\algname$ regularizes context's weights by normalizing the total sum of the weights.
Therefore, an update of a context's weight influences the weight of all the contexts,
which, in turn, influences the semantic relatedness score of
all the term pairs in general, and specifically, all the term pairs that contain terms within this context.
This mutual dependency was especially evident in the large and medium experiments
(Sections~\ref{Section:ExpLargeScale}~and~\ref{Section:ExpMediumScale}) where the learning complexity was higher.

Such considerations including other statistical and graph-theoretic properties of a particular BK corpus
(viewed as a weighted graph whose nodes are terms or term pairs),
can in principle be used in attempts to estimate the effective VC-dimension implied by $\algname$.
We believe that such considerations and analyses are important as they
can lead to better understanding and improvements of the learning process and perhaps even help in characterizing the
role and usefulness of particular BK corpora.

\section{Concluding Remarks}
Building on successful and interesting ideas, we presented in this paper a novel supervised
method for learning semantic relatedness. The proposed algorithm exhibits interesting performance over a large and medium scale
problems and excellent performance on small scale problems.
In particular, it significantly outperforms the best \emph{supervised} semantic relatedness method. Perhaps expectedly,
our test scores are also distinctly superior to
scores obtained by a plethora of unsupervised semantic relatedness methods, but of course this comparison is unfair because our method
utilizes labeled examples that must be paid for.

Our research leaves many questions and issues that we find interesting and worthy of further study.
Let us now mention a few.

\textbf{\emph{The making of a good BK corpus.}}
Our results indicate that high quality semantic relatedness can be learned with markedly different types of BK corpora.
In particular, we showed that semantic relatedness can be learned from a random and relatively small collection of ordinary fiction literature (ebooks in the Project Gutenberg).
However, we observe that the corpus ``quality'' affects both the starting performance and the learning rate. Specifically,
the starting performance, before even a single labeled example is introduced, is significantly higher when using Wikipedia as a BK corpus. In fact, this initial performance (obtained by $\nsd$ alone)
is by itself among the top performing \emph{unsupervised} methods.
Moreover, the learning rate obtained when using Wikipedia as a BK corpus, rather than
Project Gutenberg, is clearly faster.

An interesting question here is what makes a BK corpus useful for learning semantic relatedness?
Our speculative answer (yet to be investigated) is that a good corpus should consist of
semantically coherent contexts that span a wide scope of meanings.
For example, when generating the set of contexts from a fiction book, we can dissect the book into sentences, paragraphs, sections, etc.
Large contexts (say, sections) will include many more co-occurrence relations than small contexts (say, sentences), but among these
relations we expect to see entirely unrelated terms. On the other hand, if we only have very small contexts
we will to obtain only a subset of the related terms.
Thus, the context size directly affects the \emph{precision} and \emph{recall} of
observed ``meanings'' in a set of contexts.
The learning curves of Figure~\ref{Figure:Gutenberg-Spearman} hint on such a tradeoff when using the books in Project Gutenberg.

\textbf{\emph{Semantic relatedness between text fragments.}}
Many of the interesting applications mentioned in the introduction
can be solved given the ability to evaluate the relatedness between text fragments
(in this paper we only considered relatedness between individual terms).
One can think of many ways to extend any semantic relatedness measure from terms to text fragments,
and many such methods already proposed in the literature
\shortcite<see, e.g.,>{TsatsaronisVV10,Broder2007,Varelas2005}.
However, an interesting challenge would be to extract a semantic relatedness model using supervised learning where the training examples are relatedness preferences over text fragments. Such a model could be optimized to particular semantic tasks.

\textbf{\emph{Disambiguated semantic relatedness.}}
To the best of our knowledge, all the proposed term-based semantic relatedness methods discussed
in the literature follow a similar problem setup
where relatedness is evaluated regardless of particular context(s).
However, in many applications there exist such contexts that can and should be utilized.
For example, it is often the case where we have a target term along with its current context and we need to
rank the terms in our dictionary according to relatedness to this target term.
In such cases, the textual environment of the target term can be utilized to disambiguate it and
contribute to achieve better and more accurate \emph{contextual} relatedness evaluations.
It would be interesting to extend our model and methods to accommodate such contexts.

\textbf{\emph{Active learning.}}
In this work we proposed a passive learning algorithm that utilizes a uniformly sampled training set of preferences.
It would be very interesting to consider active learning techniques to cleverly sample training preferences and
expedite the learning process.
Assuming a realizable setting, and that preferences satisfy transitivity, a straightforward approach would be to use
a sorting algorithm to perfectly order $n$ term pairs
using $\Theta(n\log n)$ comparisons (training examples).
It is easy to argue that this is also an information theoretic lower bound on the sample complexity. Thus, several questions arise.
First, is it possible to approach this bound within an agnostic setting?
Second, is it possible to use some underlying structure (e.g., as exhibited in the BK corpus) to
achieve a sample complexity of $o ( n \log n)$? Finally, in many applications of interest we can do with ranking only
the top $k$
most similar terms to the target term. What would be the best theoretical and practically achievable sample complexities
in this case?
We note that a general active learning algorithm for preferences in the agnostic setting, guaranteeing $O(n \cdot polylog(n))$ sample
complexity, was very recently proposed by \citeA{Ailon2011} and \shortciteA{DBLP:journals/jmlr/AilonBE12}.

\textbf{\emph{Convergence and error bounds.}}
Regarding convergence of $\algname$, it is quite easy to see that the learning process of $\algname$ always converges.
This holds because
$\Delta_{prev}$ can only increase or stay the same for $\log_{2}(\alpha_{max})$ iterations.
This means that it is
effectively monotonic decreasing, and it is bounded below by $0$.
This $\alpha_{max}$ threshold was introduced to handle noisy (non-realizable) realistic scenarios.
The question is if the $\alpha_{max}$ is really necessary when the problem is realizable.
We conjecture that the answer to this question is ``yes'' because $\algname$ only updates contexts
in which both of the terms in question co-occur.
Error analysis is another direction that may shed light on the learning process and perhaps improve the algorithm.
It is interesting to address this question within both a statistical learning (see discussion
in Section~\ref{Section:ALearningTheoreticalPerspective}), and also under worst case considerations in the spirit of online learning.

\textbf{\emph{Benchmark datasets for semantic relatedness.}}
When considering problems involving preferences over thousands of terms, as perhaps required in large-scale commercial applications, some millions of
humanly annotated preferences are required.
In contrast, the academic semantic relatedness research is unfortunately solely relying on small sized annotated benchmark
datasets, such as WordSim353, which leaves much to be desired.
Considering that the typical vocabulary of an English speaking
adult consists of several thousands words, a desired benchmark dataset should be of at least one or even two orders of magnitude larger than
WordSim353. While acquiring a sufficiently large semantic dataset can be quite costly, we believe that the semantic relatedness research will greatly benefit
once it will be introduced.

While a formal understanding of meaning still seems to be beyond reach, we may be closer to a point where computer programs
are able to exhibit artificial understanding of meaning. Will large computational resources
to process huge corpora, together with a very large set of labeled training examples be sufficient?

\vskip 0.2in
\bibliography{supervised_learning_of_semantic_relatedness_extended}
\bibliographystyle{theapa}

\newpage
\appendix
\section*{Appendix A. The VC-dimension of an unrestricted hypothesis class}
\begin{lemma}
$VCdim\left( \cH_{all} \right) = \binom{ \binom{|D|}{2} }{2}$.
\end{lemma}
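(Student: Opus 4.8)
The plan is to identify the effective degrees of freedom of a hypothesis in $\cH_{all}$ and show they number exactly $\binom{p}{2}$, where I write $p \eqdef \binom{|D|}{2}$ for the number of (unordered) term pairs. The key structural observation is that $D_{pref}$ consists of the ordered pairs $(A,B)$ of \emph{distinct} term pairs $A \neq B$, and that these $p(p-1)$ instances partition into $\binom{p}{2}$ \emph{anti-symmetric pairs} $\{(A,B),(B,A)\}$. The anti-symmetry condition (\ref{ref:cond}) couples the two instances within each such pair and imposes no other constraint, so a hypothesis in $\cH_{all}$ is determined precisely by a free $\pm 1$ choice on one representative of each anti-symmetric pair. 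I would then prove the claim by establishing matching lower and upper bounds on the VC-dimension, both equal to $\binom{p}{2}$.

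For the lower bound, I would exhibit an explicitly shattered set $S$ of size $\binom{p}{2}$: pick exactly one representative $(A,B)$ from each of the $\binom{p}{2}$ anti-symmetric pairs. Given an arbitrary target labeling $\ell : S \to \{\pm 1\}$, I would construct a witnessing hypothesis by setting $h(A,B) = \ell(A,B)$ for each $(A,B) \in S$ and $h(B,A) = -\ell(A,B)$ for its reverse. This $h$ satisfies (\ref{ref:cond}) by construction and agrees with $\ell$ on $S$, so $S$ is shattered and $VCdim(\cH_{all}) \ge \binom{p}{2}$.

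For the upper bound, I would argue that no set $S'$ with $|S'| > \binom{p}{2}$ can be shattered. Since $D_{pref}$ decomposes into only $\binom{p}{2}$ anti-symmetric pairs, the pigeonhole principle forces any such $S'$ to contain both $(A,B)$ and $(B,A)$ for some pair $A \neq B$. The labeling that assigns $+1$ to both instances cannot be realized, because every $h \in \cH_{all}$ obeys $h(B,A) = -h(A,B)$; hence $S'$ is not shattered. Combining the two bounds yields $VCdim(\cH_{all}) = \binom{p}{2} = \binom{\binom{|D|}{2}}{2}$.

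I do not anticipate a serious obstacle here, as the argument is purely combinatorial. The only point requiring care is the bookkeeping of the anti-symmetry constraint: one must recognize that it is the \emph{sole} restriction defining $\cH_{all}$, so that distinct anti-symmetric pairs are labeled fully independently (yielding the lower bound) while the two instances within a single pair are rigidly coupled (yielding the upper bound).
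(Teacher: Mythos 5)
Your proof is correct and follows essentially the same route as the paper's: both arguments partition $D_{pref}$ into $\binom{\binom{|D|}{2}}{2}$ anti-symmetric pairs, shatter a set of representatives (one per pair) by freely assigning labels and negating on the reverses, and use the pigeonhole principle to rule out any larger shattered set. Your construction of the witnessing hypothesis is somewhat cleaner than the paper's four-case verification of $h_X$, but the underlying idea is identical.
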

\begin{proof}
We say that $\{t_1,t_2\} \succ \{t_3,t_4\}$ if
$$
(t_1 > t_2) \wedge (t_3 > t_4) \wedge((t_1 > t_3) \vee ((t_1 = t_3) \wedge (t_2 > t_4))),
$$
where $>$ is the lexicographic order.
Thus, $\succ$ induces a complete order over term pairs, $\{t_1,t_2\}$ (with $ t_1 \neq t_2$),
because the lexicographic order induces a complete order.
Also, given a quadruple $x = (\{t_1,t_2\},\{t_3,t_4\})$, we define its \emph{inverse preference},
$$
\overline x \eqdef (\{t_3,t_4\},\{t_1,t_2\}).
$$
Recall the definition of $D_{pref}$ (\ref{eq:Dpref}), and let $S \subset D_{pref}$, be the set of quadruples satisfying
(it is not hard to see that $S$ is unique),
$$
S \eqdef
\{ \left(\{t_1,t_2\},\{t_3,t_4\}\right) \ \ | \ \ \{t_1,t_2\} \succ \{t_3,t_4\}\},
$$
As $\{t_1,t_2\} \neq \{t_3,t_4\}$, we have that either
$$
\{t_1,t_2\} \succ \{t_3,t_4\}
$$
or
$$
\{t_3,t_4\} \succ \{t_1,t_2\}.
$$
It follows that
$$
x \in S \Leftrightarrow \overline x \notin S.
$$

It is easy to see that
$|S| = \binom{ \binom{|D|}{2} }{2}$,
which holds because in each of the two term pairs the order between the different terms is fixed,
and the order between the different pairs themselves is fixed as well.

Consider any $X \subseteq S$, and set
$
\overline X \eqdef \{ x | \overline x \in X \}
$.
Now define
$$
h_X : D_{pref} \to \{\pm 1\},
$$ as follows
\begin{equation}
h_X(x) \eqdef \left\{
             \begin{array}{ll}
               -1, & \hbox{if $x \in X \cup \left(D_{pref} \setminus \left( S \cup \overline X \right) \right)$;} \\
               +1, & \hbox{otherwise.}
             \end{array}
           \right.
\end{equation}
To show that $h_X$ is a proper hypothesis, satisfying condition~(\ref{ref:cond}), we consider the following
mutually exclusive cases.\\
{\bf Case A:} $x \in X$. In this case we have,
\begin{eqnarray*}
x \in X & \Rightarrow & x \in X \cup \left(D_{pref} \setminus \left( S \cup \overline X \right) \right) \\
& \Rightarrow &  h_X(x) = -1 \\
x \in X & \Rightarrow & \overline x \in \overline X \\
& \Rightarrow & \overline x \notin X \cup \left(D_{pref} \setminus \left( S \cup \overline X \right) \right) \\
& \Rightarrow &  h_X(\overline x) = +1 \\
& \Rightarrow &  h_X(x) = -h_X(\overline x).
\end{eqnarray*}
{\bf Case B:} $x \in S \setminus X$. We now have,
\begin{eqnarray*}
x \in S \setminus X & \Rightarrow & x \notin X \cup \left(D_{pref} \setminus \left( S \cup \overline X \right) \right)\\
& \Rightarrow &  h_X(x) = +1 \\
x \in S \setminus X & \Rightarrow & (\overline x \notin S) \wedge (\overline x \notin \overline X) \\
& \Rightarrow &  \overline x \in  D_{pref} \setminus \left( S \cup \overline X \right) \\
& \Rightarrow &  h_X(\overline x) = -1 \\
& \Rightarrow &  h_X(x) = -h_X(\overline x).
\end{eqnarray*}
{\bf Case C:}
Now $x \notin S$ and $x \notin \overline X$,
\begin{eqnarray*}
(x \notin S) \wedge (x \notin \overline X) & \Rightarrow & x \in D_{pref} \setminus \left( S \cup \overline X \right) \\
& \Rightarrow &  h_X(x) = -1 \\
(x \notin S) \wedge (x \notin \overline X) & \Rightarrow & \overline x \in S \setminus X \\
& \Rightarrow &  h_X(\overline x) = +1 \\
& \Rightarrow &  h_X(x) = -h_X(\overline x).
\end{eqnarray*}
{\bf Case D:} In this case $x \notin S$ and $x \in \overline X$,
\begin{eqnarray*}
x \in \overline X & \Rightarrow & h_X(x) = +1 \\
x \in \overline X & \Rightarrow &  \overline x \in X \\
& \Rightarrow &  h_X(\overline x) = -1 \\
& \Rightarrow &  h_X(x) = -h_X(\overline x).
\end{eqnarray*}
To summarize, in all cases, and therefore for all $x \in D_{pref}$,
$$
h_X(x)=-h_X(\overline x),
$$
namely, $h_X$ satisfies condition~(\ref{ref:cond}), so $h_X \in \cH_{all}$.

We have that $\forall x \in X, h_X(x) = -1$,
and $\forall x \in S \setminus X, h_X(x) = +1$. Since $X$ is an arbitrary subset, $S$ is shattered by $\cH_{all}$.
Therefore,
$$
VCdim\left( \cH_{all} \right) \geq |S| = \binom{ \binom{|D|}{2} }{2}.
$$

Now assume, by contradiction, that there exists $S'$, satisfying
$|S'| >  \binom{ \binom{|D|}{2} }{2}$, and $S'$ is also shattered by $\cH_{all}$.
If this holds then there exists a classifier $h' \in \cH_{all}$, such
that $\forall x \in S', h'(x) = -1$.
Using the pigeonhole principle we get that there must be a quadruple
$x \in S'$ such that also $\overline x \in S'$ (and $x \neq \overline x$).
Thus,
$h'(x) = -1 = h(\overline x)$, contradicting the assumption that $h' \in \cH_{all}$.
Therefore, such a set $S'$ cannot exist, and $VCdim\left( \cH_{all} \right) \leq |S| = \binom{ \binom{|D|}{2} }{2}$,
which completes the proof.
\end{proof}

\section*{Appendix B. The VC-dimension of permutations}
\begin{lemma}
[\cite{RadinskyA11}]
$VCdim\left( \cH_{\pi} \right) = \binom{|D|}{2} - 1$.
\end{lemma}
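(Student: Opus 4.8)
The plan is to translate the statement into the language of graph orientations. Set $q \eqdef \binom{|D|}{2}$ and identify the $q$ distinct term pairs with the vertices of a complete graph $K_q$. Under this identification a hypothesis in $\cH_{\pi}$ --- a total order over the term pairs --- becomes an acyclic orientation of $K_q$ (equivalently, a linear ranking of its vertices), while a labeled instance $(\{t_1,t_2\},\{t_3,t_4\})$ becomes an \emph{oriented edge} on the two vertices $\{t_1,t_2\}$ and $\{t_3,t_4\}$ recording which pair is ranked higher. The key observation I would establish is that a total order realizes \emph{exactly} the acyclic orientations of a graph (one direction is immediate; the converse is topological sorting). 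Consequently, a set $S$ of instances is shattered if and only if every assignment of directions to the edges of the underlying undirected graph $G_S$ extends to some total order, which by the observation holds if and only if \emph{every} orientation of $G_S$ is acyclic --- that is, if and only if $G_S$ is a forest.

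For the lower bound I would exhibit a forest with the maximum number of edges. Take any spanning tree on the $q$ pair-vertices (for concreteness, a path $P_1 - P_2 - \cdots - P_q$ through the term pairs), yielding a set $S$ of $q-1$ instances. Given any desired labeling $\{\pm 1\}^{S}$, orient each tree edge in the direction dictated by its label; since a tree contains no cycle, this orientation is acyclic, and a topological sort produces a permutation whose induced hypothesis reproduces precisely that labeling. Hence $S$ is shattered and $VCdim(\cH_{\pi}) \ge q - 1$.

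For the upper bound I would rule out shattering any set $S$ of $q$ instances. If two instances of $S$ share the same underlying unordered pair $\{t_1,t_2\}$, then the anti-symmetry condition~(\ref{ref:cond}) forces their labels to be opposite, so the all-$(+1)$ labeling is unattainable and $S$ is not shattered. Otherwise $S$ induces $q$ \emph{distinct} edges on $q$ vertices, and a graph with at least as many edges as vertices necessarily contains a cycle; orienting that cycle cyclically gives a labeling that no total order can realize, since a linear ranking cannot be consistent with a directed cycle. Either way $S$ fails to be shattered, giving $VCdim(\cH_{\pi}) \le q - 1$ and hence equality.

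The main obstacle is not the counting but the clean setup of the equivalence ``shatterable $\Leftrightarrow$ forest'': I would need to argue both implications of ``total orders realize exactly the acyclic orientations'' carefully, and in particular handle the anti-symmetry constraint~(\ref{ref:cond}), which couples the two orientations of a single term-pair edge and is what forces the repeated-edge case in the upper bound. Once that dictionary is in place, the extremal bound $|E| \le q-1$ for forests on $q$ vertices finishes both directions immediately.
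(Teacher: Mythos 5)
Your proposal is correct and follows essentially the same route as the paper's own proof: term pairs as vertices, instances as edges, shattering equivalent to the underlying graph being a forest, and the extremal edge count $\binom{|D|}{2}-1$ finishing both bounds. You are in fact somewhat more careful than the paper, which leaves implicit both the topological-sort argument that acyclic orientations extend to total orders and the degenerate case of two instances sharing the same underlying unordered edge (inverse quadruples), both of which you handle explicitly.
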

\begin{proof}
Given any subset $S \subseteq D_{pref}$, we view $S$ as an undirected graph, $G = (V,E)$, whose node set $V$ consists of all
word pairs in the quadruples of $S$, and every edge in $E$ is associated with a quadruple in $S$
that connects its two pairs.
Any classifier, $h \in \cH$, defines directions for the edges of this graph in accordance with its preferences.

The maximum number of nodes in $G$ is $n = \binom{|D|}{2}$.
So, if $|S| = |E| > n-1$, the graph must contain an undirected cycle.
If $S$ is shattered by $\cH_{\pi}$, there must be a particular classifier $h \in \cH_{\pi}$
that classifies the same all the quadruples
in $S$. However, such a classifier creates a directed cycle in $G$.
Since $\cH_{\pi}$ contains only permutations,
its classifiers cannot induce directed cycles in $G$, and therefore, $S$ cannot be shattered by $\cH_{\pi}$.
It follows that $VCdim\left( \cH_{\pi} \right) \leq n-1 = \binom{|D|}{2} - 1$.

In case $|S| \leq \binom{|D|}{2} - 1$ and its underlying undirected graph $G$ is a forest,\footnote{From graph theory we know that such forests exists due to the edges number bound.}
then the edges can be directed in any desirable way, without creating a directed cycle.
In other words, $S$ can be shattered by $\cH_{\pi}$, and $VCdim\left( \cH_{\pi} \right) \geq \binom{|D|}{2} - 1$.
\end{proof}

\end{document}